\documentclass[11pt]{extarticle}

\usepackage{arxiv}

\usepackage[utf8]{inputenc} 
\usepackage{hyperref}       
\usepackage{url}            
\usepackage{booktabs}       
\usepackage{amsfonts}       
\usepackage{nicefrac}       
\usepackage{microtype}      
\usepackage{graphicx}
\usepackage[numbers]{natbib}
\usepackage{doi}
\usepackage{amsthm}
\usepackage{amsmath}
\usepackage{amssymb}
\usepackage{subfigure}
\newtheorem{theorem}{Theorem}[section]

\title{Optimizing an Adaptive Fuzzy Logic Controller of a 3-DOF Helicopter with a Modified PSO Algorithm}


\author{ Shokoufeh Naderi \\
	Département Génie Électrique \& Génie Informatique\\
	Université de Sherbrooke\\
	\texttt{shokoufeh.naderi@usherbrooke.ca} \\
	\And
	Maude J.~Blondin \\
	Département Génie Électrique \& Génie Informatique\\
	Université de Sherbrooke\\
	\texttt{maude.blondin2@usherbrooke.ca} \\
	\And
	Behrooz Rezaie \\
	Faculty of Electrical and Computer Engineering\\
	Babol Noshirvani University of Technology\\
	\texttt{brezaie@nit.ac.ir} \\
}

\date{}



\begin{document}
\maketitle

\begin{abstract}
	This paper investigates the controller optimization for a helicopter system with three degrees of freedom (3-DOF). To control the system, we combined fuzzy logic with adaptive control theory. The system is extensively nonlinear and highly sensitive to the controller's parameters,  making it a real challenge to study these parameters' effect on the controller's performance. Using metaheuristic algorithms for determining these parameters is a promising solution. This paper proposes using a modified particle swarm optimization (MPSO) algorithm to optimize the controller. The algorithm shows a high ability to perform the global search and find a reasonable search space. The algorithm modifies the search space of each particle based on its fitness function value and substitutes weak particles for new ones. These modifications have led to better accuracy and convergence rate. We prove the efficiency of the MPSO algorithm by comparing it with the standard PSO and six other well-known metaheuristic algorithms when optimizing the adaptive fuzzy logic controller of the 3-DOF helicopter. The proposed method's effectiveness is shown through computer simulations while the system is subject to uncertainties and disturbance. We demonstrate the method's superiority by comparing the results when the MPSO and the standard PSO optimize the controller.
\end{abstract}

\keywords{Modified Particle Swarm Optimization (MPSO) \and 3-DOF Helicopter \and Adaptive Fuzzy Logic Controller}

\section{Introduction}\label{sec1}
During the last decades, unmanned aerial vehicles (UAVs) have widely been developed and used due to technological advancements \cite{chen2018adaptive}. They have applications in military and civil fields, such as traffic condition assessment and forest fire monitoring, to name a few \cite{chen2015adaptive}. They possess essential features like hovering and vertical take-off, which increase their applicability. However, they are highly nonlinear and subject to disturbances and uncertainties. The non-linearity and susceptibility to disturbance demands for a control structure that can reject external disturbances, such as wind \cite{castaneda2016continuous}.

Many classic, adaptive, and robust control strategies have been proposed to tackle this control problem. In \cite{pounds2014stability}, for example, a proportional derivative (PD) and a proportional integral derivative (PID) attitude and position controllers are designed to stabilize a rotorcraft in free flight. 
In \cite{uddin2018active}, active vibration control is presented for a helicopter rotor blade that uses a linear quadratic regulator (LQR) to reduce vibrations. In \cite{dutta2021adaptive}, designing an adaptive model predictive control has been addressed for a 2-DOF helicopter in the presence of uncertainties and constraints. 

These control strategies perform well in the presence of parametric uncertainties. However, they may underperform in real-life applications with uncertainties, such as external disturbances, noises, and unmodeled dynamics of the machine, mainly because the methods are developed and based on the exact mathematical model of the system \cite{chaoui2020adaptive, chaoui2017adaptive}.

On the other hand, intelligent control techniques can adapt themselves in the presence of uncertainties. These approaches have various structures, including neural networks, fuzzy systems, and machine learning models \cite{salahshour2019designing}. The feature of not being dependent on a precise mathematical model has led to many publications on their combination with conventional control strategies for the UAVs. In \cite{gonzalez2018design}, the design and experimental validation of an adaptive fuzzy PID controller is presented for a 3-DOF helicopter. The interval type-2 fuzzy logic is combined with adaptive control theory to control a 3- DOF helicopter in \cite{chaoui2020adaptive}, which is robust to various types of uncertainties. However, using higher types of fuzzy systems increases the computational loads. Another example is the publication \cite{xue2019training} that proposes designing data-driven attitude controllers for a 3-DOF helicopter under multiple constraints, in which the reinforcement learning technique updates the controller. An adaptive neural network back-stepping controller is designed in \cite{yang2019adaptive} to compensate for unmodeled dynamics and external disturbances.

The problem with these intelligent controls is that they usually have many parameters to tune. Tuning is a complex task and almost impossible to do by trial and error. To overcome that, researchers usually implement metaheuristic algorithms \cite{naderi2020designing}. These algorithms have recently attracted an increased interest for this purpose thanks to their high convergence speed and high accuracy. For instance, in \cite{yu2019optimal} the PSO algorithm is implemented to optimize weighting matrices of the LQR controller to design an optimal flight control for a 2-DOF helicopter. The PSO is also used in \cite{humaidi2019particle} to tune the values of design parameters of an adaptive super-twisting sliding mode controller for a two-axis helicopter in the presence of model uncertainties. The publication \cite{tan2016genetic} employs a genetic approach for real-time identification and control of a helicopter, and \cite{ding2015chaotic} develops a chaotic artificial bee colony algorithm to identify a small-scale helicopter in hover condition. 

In some papers, the researchers have improved the performance of these algorithms by investigating the accuracy, speed, and convergence rate. In \cite{hu2020fuzzy}, for example, an improved genetic algorithm (GA) is used to optimize the initial fuzzy rules of an adaptive fuzzy PID controller for a micro-unmanned helicopter. However, in large-scale problems, where the search space is not clearly specified, the algorithm may get trapped into local optimal solutions \cite{salahshour2019designing}. Therefore, there is a need for algorithms that offers fast convergence and high accuracy while simultaneously being capable of modifying the search space.

In this paper, we employ a Modified Particle Swarm Optimization (MPSO) algorithm to optimize the parameters of an adaptive fuzzy controller for a 3-DOF helicopter. The helicopter system is highly nonlinear, and the controller's values significantly affect its performance. Therefore, optimizing these parameters can lead to more effective results. We chose the MPSO algorithm because it shows a satisfactory performance when dealing with many parameters to optimize. Furthermore, it eliminates the need for specifying the exact boundaries of the search space as the search space of the particles is modified based on the value of each particle. This results in searching in a reasonable space, which means not only does it shorten the optimization time, but it also results in finding a better solution by avoiding getting trapped into local optimums.
Additionally, the algorithm considers an elimination phase, meaning some poor particles are substituted by new particles in the new search space. These modifications have improved the convergence rate and accuracy of the algorithm. Indeed, simulation results in \cite{naderi2020designing} prove the superiority of this algorithm over several metaheuristic algorithms, including improved GA, imperialist competitive algorithms, and artificial bee colony. This paper shows the effectiveness of using the MPSO algorithm to optimize the controller's parameters for the 3-DOF helicopter through simulations when the system is subject to uncertainties and disturbance. Also, we compare the performance of the MPSO algorithm with the standard PSO algorithm and some other metaheuristic algorithms. 

In short, this paper contains the following contributions:

\begin{itemize}
    \item Optimizing an adaptive type-1 fuzzy logic controller for a 3-DOF helicopter model through an MPSO algorithm.
    \item Comparing the performance of the proposed controller with the PID controller and with the controller optimized through the PSO algorithm.
    \item Analyzing the robustness of the proposed controller in the presence of uncertainty and disturbance.
    \item Comparing the performance of the MPSO algorithm and some other well-known metaheuristic algorithms for the task of optimizing the adaptive type-1 fuzzy logic controller of the helicopter model along with analyzing their results.
\end{itemize}

The adaptive type-2 fuzzy logic controller for the 3-DOF helicopter is presented in \cite{chaoui2020adaptive}. Since the membership functions are fixed in the conventional type-1 fuzzy systems, which may lead to weak performance in the presence of uncertainties, the authors of \cite{chaoui2020adaptive} proposed using type-2 fuzzy controllers for the highly nonlinear helicopter system. However, they did not present the results of the type-1 fuzzy logic controller. In our paper, we use an adaptive type-1 fuzzy controller. We show that, for the considered task, the type-1 fuzzy logic controller performs very well under uncertainties and disturbance, eliminating the need for employing higher types of fuzzy systems. As a result, the computational load is lowered.

\begin{figure}[b!]%
\centering
\includegraphics[width=\textwidth]{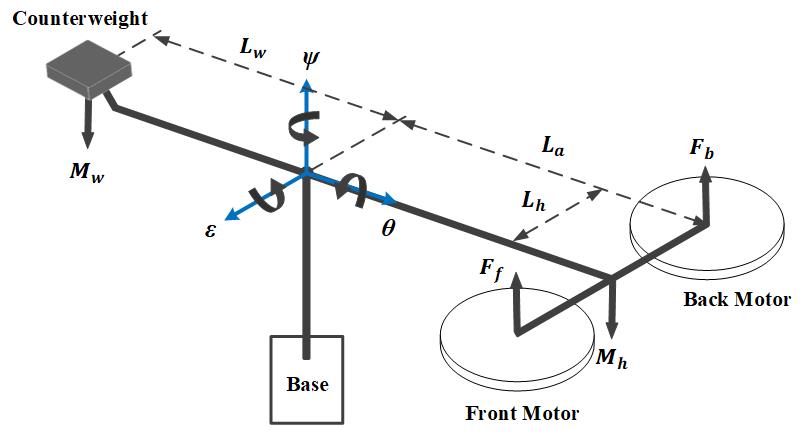}
\caption{The 3-DOF helicopter.}\label{Fig1}
\end{figure}

The rest of this paper is organized as follows:
\hyperref[sec2]{Sect. \ref{sec2}} presents the dynamic model of the helicopter. 
\hyperref[sec3]{Sect. \ref{sec3}} provides the adaptive fuzzy logic controller. 
\hyperref[sec4]{Sect. \ref{sec4}} describes the basic concepts of the MPSO algorithm. 
Simulation results and MPSO algorithm efficiency analysis are presented in \hyperref[sec5]{Sect. \ref{sec5}}, and \hyperref[sec6]{Sect. \ref{sec6}} provides concluding remarks.

\section{Mathematical model of the 3-DOF helicopter system}\label{sec2}

The schematic of the 3-DOF helicopter system is shown in \hyperref[Fig1]{Fig. \ref{Fig1}}. It has two DC motors mounted at the two ends of a frame. The DC motors drive two propellers which generate the lift forces $F_f$ and $F_b$. These forces control the attitude of the helicopter. The system studied here is underactuated, i.e., it rotates freely about three axes - the pitch axis $\theta$, the roll axis $\epsilon$, and the yaw axis $\psi$- with only two control forces. The mathematical model of a 3-DOF helicopter is given by the following differential equations  \cite{castaneda2016continuous}:
\begin{align} \label{eq.1}
&J_\epsilon \ddot{\epsilon}= \; g(M_h L_a-M_\omega L_\omega)cos\epsilon + L_a cos\theta \, u_1 \notag \\
&J_\theta \ddot{\theta}= \; L_h u_2\\
&J_\psi \ddot{\psi}= \; L_a cos\epsilon \, sin\theta \, u_1 \notag
\end{align}
where $ u_1 = F_f + F_b $, $ u_2 = F_f - F_b $. The pitch $\theta$, roll $\epsilon$, and yaw $\psi$ angles give the position of the helicopter body. Values of the model parameters with their description are given in \hyperref[table1]{Table \ref{table1}}. As it is shown in \hyperref[Fig1]{Fig. \ref{Fig1}}, the roll ($\epsilon$) and yaw ($\psi$) angles are perpendiculars. The intersection of the three axes is considered the origin of the coordinate frame. The helicopter model is defined as follows:
\begin{itemize}
\item[•] Pitch angle $\theta$ is defined as $-45^{\circ} \leq \theta \leq +45^{\circ}$.
\item[•] Roll angle $\epsilon$ is defined as $-27.5^{\circ} \leq \epsilon \leq +30^{\circ}$.
\end{itemize}
These constraints on the the pitch and roll angles will appear as saturation functions in the program.

\begin{table}[t!]
\begin{center}
\caption{Values of the 3-DOF helicopter model \cite{castaneda2016continuous}.}\label{table1}%
\begin{tabular}{@{}p{3cm} p{7.8cm} p{2.2cm}@{}}
\toprule
Model's parameter &  Description & Value\\
\midrule
			$ M_h $  & Mass of the helicopter & $ 1.426 \: kg $ \\
			$ M_\omega $  & Mass of the counterweight & $ 1.870 \: kg $ \\
			$ L_a $  &  Distance between the roll axis and the center of mass & $ 0.660 \: m $ \\
			$ L_\omega $ &  Distance between the lifting axis and the counterweight & $ 0.470 \: m $ \\
			$ L_h $ & Distance between the pitch axis and each motor & $ 0.178 \: m $ \\
			$ J_\epsilon $ & Moment of inertia about roll axis & $ 1.0348\: kg.m^2 $ \\
			$ J_\theta $ & Moment of inertia about pitch axis & $ 0.0451 \: kg.m^2  $ \\
			$ J_\epsilon $ & Moment of inertia about yaw axis & $ 1.0348 \: kg.m^2 $ \\
			$ g $  & Gravitational constant & $ 9.81 \: m.s^2 $ \\
\bottomrule
\end{tabular}
\end{center}
\end{table}

\section{Adaptive fuzzy logic controller}\label{sec3}
\subsection{Design of the controller}\label{subsec3.1}
As shown in \eqref{eq.1}, the roll and yaw dynamics depend on the pitch value and are actuated by $u_1$ while the pitch dynamic is actuated by $u_2$. This means that only two input forces control the system, i.e., the system is underactuated. To control the system with only two inputs,  we must first define the desired roll and yaw trajectories. Next, the desired pitch trajectory is obtained based on an internal loop. \hyperref[Fig3]{Fig. \ref{Fig3}} shows the block diagram of the control structure of the helicopter system. In the control structure, two virtual inputs $v_1$ and $v_2$ are defined to achieve decoupling  \cite{chaoui2020adaptive}:
\begin{align} \label{eq.4}
\begin{split}
&v_1 = \cos\theta \, u_1  \\
&v_2 = \cos\epsilon \, \sin\theta \, u_1
\end{split}
\end{align}

\begin{figure}[b!]%
\centering
\includegraphics[width=\textwidth]{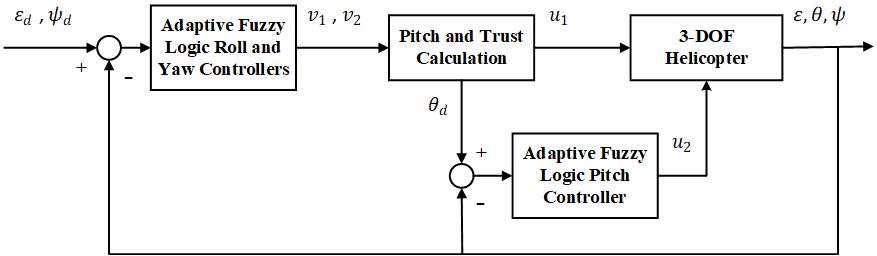}
\caption{Block diagram of the control structure.}\label{Fig3}
\end{figure}

Replacing the virtual inputs $v_1$ and $v_2$ in \eqref{eq.1} transforms the nonlinear system \eqref{eq.1} into the decoupled system \eqref{eq.5}, in which, $\epsilon$, $\theta$, and $\psi$, are controlled by $v_1$, $u_2$, and $v_2$, respectively:
\begin{align} \label{eq.5}
&J_\epsilon \ddot{\epsilon}= \; g(M_h L_a-M_\omega L_\omega)cos\epsilon + L_a v_1 \notag \\
&J_\theta \ddot{\theta}= \; L_h u_2\\
&J_\psi \ddot{\psi}= \; L_a v_2 \notag
\end{align}
Notice that $v_1$ and $v_2$ in \eqref{eq.4} are not independent but are linked through $u_1$.

\noindent From \eqref{eq.4} we have:
\begin{equation}\label{eq.6}
u_1^2 = \frac{v_1^2}{\cos^2\epsilon} + v_2^2
\end{equation}
Then, it follows that:
\begin{equation}\label{eq.7}
u_1 = S\sqrt{\frac{v_1^2}{\cos^2\epsilon} + v_2^2}\;, \quad S = \begin{cases}
\text{sign}(v_2) &  v_2 \ne 0\\
0 &  v_2 = 0\\ 
\end{cases}
\end{equation}
\eqref{eq.4} also satisfies:
\begin{equation}\label{eq.8}
\tan \theta = \frac{v_2}{\cos\epsilon v_1}
\end{equation}
As we said, a way to control the system by only two inputs is to first define the desired roll and yaw trajectories, and then, to obtain the desired pitch trajectory. From \eqref{eq.8}, this desired pitch trajectory is obtained as follows:
\begin{equation}\label{eq.9}
\theta_d = \tan^{-1} \frac{v_2}{\cos\epsilon v_1}
\end{equation}
So the control strategy is as follows: 
\\
First, the control inputs $v_1$ and $v_2$ are computed using the tracking error between $\epsilon$ and $\psi$ and their desired trajectories $\epsilon_d$ and $\psi_d$. Then, the pitch desired trajectory $\theta_d$ and $u_1$ are computed using \eqref{eq.9} and \eqref{eq.7}. Once the $\theta_d$ is computed, the pitch controller allows to design $u_2$.

Given the desired trajectories, the errors are defined as follows:
\begin{align}\label{eq.10}
&e_\epsilon = \epsilon - \epsilon_d \notag\\
&e_\theta = \theta - \theta_d\\
&e_\psi = \psi - \psi_d \notag
\end{align}

Each adaptive fuzzy logic controller has to adjust its weight to track these errors to zero. The errors and their derivatives are considered as inputs of the controllers. So, each adaptive fuzzy controller has two inputs and an output. Using the centroid defuzzification method, the output of the controller is as follows:
\begin{equation}\label{eq.11}
Y(x) = \frac{ \sideset{}{_{i=1}^{n}}{\sum}f^i y^i} { \sideset{}{_{i=1}^{n}}{\sum}f^i},
\end{equation}
where $n$ is the number of fuzzy logic rules, and $y^i$ is the output singleton number. $f^i$ is expressed by the following equation:
\begin{equation}\label{eq.12}
f^i = \prod_{j=1}^{n} \mu_{{A}_j^i} (x_j)
\end{equation}
where $\mu_{{A}_j^i}$ is the value of the membership function for the fuzzy variable $x_j$.

Considering $W \in \mathbb{R}^n$ as the adjustable parameter vector composed of the fuzzy logic consequent part, the output of the adaptive fuzzy logic controller is expressed by:
\begin{equation}\label{eq.13}
Y = \Phi^T + \sigma = \hat \Phi^T \hat W,
\end{equation}
where $\sigma$ is the fuzzy logic output error, and $ \hat \Phi \in \mathbb{R}^n $ is an $n$-dimensional vector representing known functions of the fuzzy logic antecedent part \cite{chaoui2020adaptive}. $ \hat \Phi $ is defined as follows:
\begin{equation}\label{eq.14}
\hat \Phi = \frac{f^i} { \sideset{}{_{i=1}^{n}}{\sum}f^i}.
\end{equation}
More detail about the choice of fuzzy rules can be found in \cite{chaoui2017adaptive}.

The control law is given as \cite{chaoui2020adaptive}:
\begin{align}\label{eq.15}
&v_1 = \hat \Phi^T_\epsilon \hat W_\epsilon \notag \\
&v_2 = \hat \Phi^T_\psi \hat W_\psi \\
&u_2 = \hat \Phi^T_\theta \hat W_\theta \notag 
\end{align}

The controllers have to drive the tracking errors $e_\epsilon$, $e_\psi$, and $e_\theta$ to zero. To guarantee that the outputs of the system track the desired trajectories, we must have a tuning method for the parameters $\hat W$. The following theorem provides this tuning method \cite{teiar2014adaptive, chaoui2020adaptive}.

\begin{theorem}
Considering a nonlinear system in the form of \eqref{eq.1}, \eqref{eq.4}, and \eqref{eq.5} with the control law \eqref{eq.15}, the stability of the closed-loop system is achieved with the following adaptation law:
\begin{equation}\label{eq.16}
\dot{\hat W} = -\Gamma \hat \Phi B^T P E,
\end{equation}
where $\Gamma = diag(\gamma_1, \gamma_2, ... \, , \gamma_j)$ and $\gamma_l$ is a positive constant, $l=1,...\, ,j$. $P$ is a chosen symmetric positive definite matrix that satisfies the following Lyapunov equation:
\begin{equation}\label{eq.17}
A^T P+PA=-Q,
\end{equation}
with $Q>0$, and:
\begin{equation*}
E=
\begin{bmatrix}
e\\
\dot e
\end{bmatrix}, 
\; A=
\begin{bmatrix}
0 &1\\
-K_p &-K_d
\end{bmatrix}, 
\;B=
\begin{bmatrix}
0\\
\hat \eta
\end{bmatrix}, 
\end{equation*}
where $K_p>0$, $K_d>0$, and $\hat \eta$ is the estimation of $\eta$, defined as:
\begin{equation}\label{eq.18}
\eta_\epsilon = \frac{L_a}{J_\epsilon}, \quad 
\eta_\psi = \frac{L_a}{J_\psi}, \quad 
\eta_\theta = \frac{L_h}{J_\theta}
\end{equation}
\end{theorem}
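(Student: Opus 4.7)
The approach is the standard Lyapunov-function method for adaptive control of systems whose ideal feedback is parameterized by a universal function approximator. The plan is to (i) rewrite each channel's tracking-error dynamics as a second-order linear system driven by the difference between the current fuzzy weights $\hat W$ and an ``optimal'' weight vector $W^{\ast}$ for which the fuzzy expansion reproduces the ideal feedback-linearizing input; (ii) introduce a composite Lyapunov function that penalizes both the tracking error and the parameter error; (iii) differentiate it, use the Lyapunov equation \eqref{eq.17}, and pick the adaptation law so that the sign-indefinite term in $\dot V$ vanishes; (iv) conclude convergence via Barbalat's lemma.

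Concretely, I would work channel by channel. Take $\epsilon$ first. Using \eqref{eq.5} with $v_1 = \hat\Phi_\epsilon^{T}\hat W_\epsilon$, I define the ideal $v_1^{\ast}$ that would enforce $\ddot e_\epsilon + K_d \dot e_\epsilon + K_p e_\epsilon = 0$, and invoke the universal approximation property of the fuzzy system \eqref{eq.13} to write $v_1^{\ast} = \hat\Phi_\epsilon^{T} W_\epsilon^{\ast} + \sigma_\epsilon$ with a small residual $\sigma_\epsilon$. Setting $\tilde W_\epsilon := \hat W_\epsilon - W_\epsilon^{\ast}$, the closed-loop tracking error obeys
\begin{equation*}
\dot E_\epsilon = A\, E_\epsilon + B\,\tilde W_\epsilon^{T}\hat\Phi_\epsilon + B\,\sigma_\epsilon,
\end{equation*}
with exactly the $A$ and $B$ matrices of the statement. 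The same construction is repeated verbatim for the pitch and yaw channels with their respective $\hat\eta$'s from \eqref{eq.18}.

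The heart of the argument is the choice
\begin{equation*}
V(E,\tilde W) \;=\; E^{T} P E \;+\; \tilde W^{T} \Gamma^{-1} \tilde W ,
\end{equation*}
applied per channel (and summed to obtain a composite Lyapunov function). Differentiating along the error dynamics, using $\dot{\tilde W} = \dot{\hat W}$ and \eqref{eq.17}, yields
\begin{equation*}
\dot V \;=\; -E^{T} Q E \;+\; 2\,\tilde W^{T}\bigl(\hat\Phi\, B^{T} P E + \Gamma^{-1}\dot{\hat W}\bigr) \;+\; 2\,E^{T} P B\,\sigma .
\end{equation*}
The adaptation law \eqref{eq.16} is designed precisely to annihilate the parenthesized factor, leaving $\dot V = -E^{T} Q E + 2 E^{T} P B\,\sigma$. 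With $Q\succ 0$ and $\sigma$ bounded (and small), this gives uniform ultimate boundedness of $(E,\tilde W)$; in the ideal case $\sigma\equiv 0$ we get $\dot V \le 0$, $V$ non-increasing hence $(E,\tilde W)$ bounded, and Barbalat's lemma applied to $E^{T} Q E$ (whose second derivative is bounded, since all signals and $\hat\Phi$ are) delivers $E(t)\to 0$.

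The step I expect to be the main obstacle is the inter-channel coupling coming from \eqref{eq.4}, since $v_1$ and $v_2$ share the single physical input $u_1$ and the pitch reference $\theta_d$ in \eqref{eq.9} is itself a function of $(v_1,v_2,\epsilon)$. The clean decoupled form of \eqref{eq.5} is valid only after the change of variables, and the induced cross-terms between the $\epsilon$, $\theta$, $\psi$ error dynamics must either be absorbed into the residuals $\sigma$ (using a bound on $\hat\Phi$, which is guaranteed by its convex-combination definition \eqref{eq.14}) or dominated by the three $-E^{T} Q E$ contributions, typically via a two-time-scale argument in which the inner pitch loop is tuned fast enough that $\theta$ tracks $\theta_d$ before it significantly perturbs the outer loops. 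Verifying this domination, together with the standard boundedness hypotheses on the desired trajectories and their derivatives, is where most of the technical work of the proof lies.
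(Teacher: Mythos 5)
Your outline is correct and is exactly the standard Lyapunov argument for this class of adaptive fuzzy controllers: the error dynamics $\dot E = AE + B\tilde W^{T}\hat\Phi + B\sigma$, the composite function $V = E^{T}PE + \tilde W^{T}\Gamma^{-1}\tilde W$, cancellation of the sign-indefinite term via \eqref{eq.16}, and Barbalat's lemma (or ultimate boundedness when $\sigma \neq 0$). Note that the paper itself gives no proof --- it simply defers to the cited reference \cite{chaoui2020adaptive} --- and your derivation reproduces the argument used there; your closing remark about the $v_1$--$v_2$ coupling through $u_1$ and the $\theta_d$ inner loop is a legitimate technical gap that neither the paper nor the standard per-channel treatment explicitly resolves.
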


\begin{proof}
See \cite{chaoui2020adaptive}.
\end{proof}

\subsection{Optimization of the controller}\label{subsec3.2}
During the adaptive fuzzy controller design, one issue arises, i.e., selecting the design parameter values of the controller, including $K_p, K_d$, and the parameters of the fuzzy system. For each fuzzy controller, Gaussian membership functions have been considered. Each membership function has two parameters, i.e., center $c$ and sigma $\sigma$. Choosing the right values for these parameters can greatly affect the performance of the controllers. Moreover, because the 3-DOF helicopter model is highly complex and nonlinear, any small change in the $K_p$ and $K_d$ values can lead to unstable behavior. In this paper, the MPSO algorithm is implemented to find the optimal values of these parameters. A fitness function, i.e., a function computes the cost of each particle while the algorithm searches for the best value. We choose the root mean square error (RMSE) to be the cost function because it intensifies the impact of large errors:
\begin{equation}\label{eq.19}
RMSE = \sqrt{\frac{1}{n} \sideset{}{_{i=1}^{n}}{\sum} {(e_\epsilon^2 + e_\psi^2 + e_\theta^2)} }
\end{equation}

\section{MPSO algorithm}\label{sec4}
The PSO algorithm is an optimization algorithm based on the behavior of bird flocking. 
For the first iteration of the algorithm, the velocity and position of all particles are initialized randomly. They share their information with their neighbors as they move towards the solution. Therefore, apart from the experience that each particle gains, they also use other particles' experience. This combination leads to updating their positions and velocities based on their knowledge and the most successful particle's experience, i.e., the one that is closer to the target. And a cost function measures this success. The longer the distance between a particle and the target is, the higher the cost function value for that particle is.  

The following equations update the position and velocity of particle $i$ \cite{naderi2020designing}:
\begin{align}\label{eq.2&3}
v_i(t+1) =\; & w v_i(t)  + c_1 r_1 ( p_{best,i}(t) - x_i(t) )\\ \notag
& + c_2 r_2 (g_{best}(t) - x_i(t)),\\
x_i(t+1) =\; & x_i(t)  +  v_i(t+1), 
\end{align}
where $ v_i(t) $ and $ x_i(t) $ are, respectively, the velocity and the position of the $i^{th}$ particle at  instance $t$. $ c_1 $ and $ c_2 $ are cognitive and social acceleration factors, respectively. $ r_1 $ and $ r_2 $ are uniform random numbers distributed within the interval [0, 1], and  $ w $ is the inertia weight. $ p_{best,i} $ and $ g_{best} $ are, respectively, the best solution that the $i^{th}$ particle and all particles have obtained so far. In every iteration, the best solution of each particle and the best solution of all particles are stored. In the following iteration, the algorithm uses the stored information to modify the position and velocity of particles. Hence, the particles gradually move towards $ g_{best} $ until reaching the target. The parameters of \eqref{eq.2&3} are selected as follows:
\begin{itemize}
\item[•] The inertia weight ($ w $) balances the local and global search. A large value intensifies the global search and a small value the local search. In the beginning, its value should be large to search the entire space extensively. Then, it should gradually be reduced to reach the optimal solution.
\item[•] The cognitive and social acceleration factors ($ c_1 $ and $ c_2 $) also help balance the local and global search; they are often set to the same value. They represent the effects of $ p_{best,i} $ and $ g_{best} $ on the velocity of each particle.
\end{itemize}

The iterative procedure of the algorithm continues until the stopping condition is met, such as achieving an acceptable cost function value or reaching a maximum number of iterations.

The PSO algorithm is easy to implement with few parameters to modify. Nevertheless, there are some drawbacks. In problems with many local minimums, the algorithm is susceptible to falling into one. Moreover, specifying the exact area of the search space is not always possible. To overcome these drawbacks, a modified version of this algorithm is presented in \cite{naderi2020designing}.

\begin{figure}[t!]%
\centering
\includegraphics[width=\textwidth]{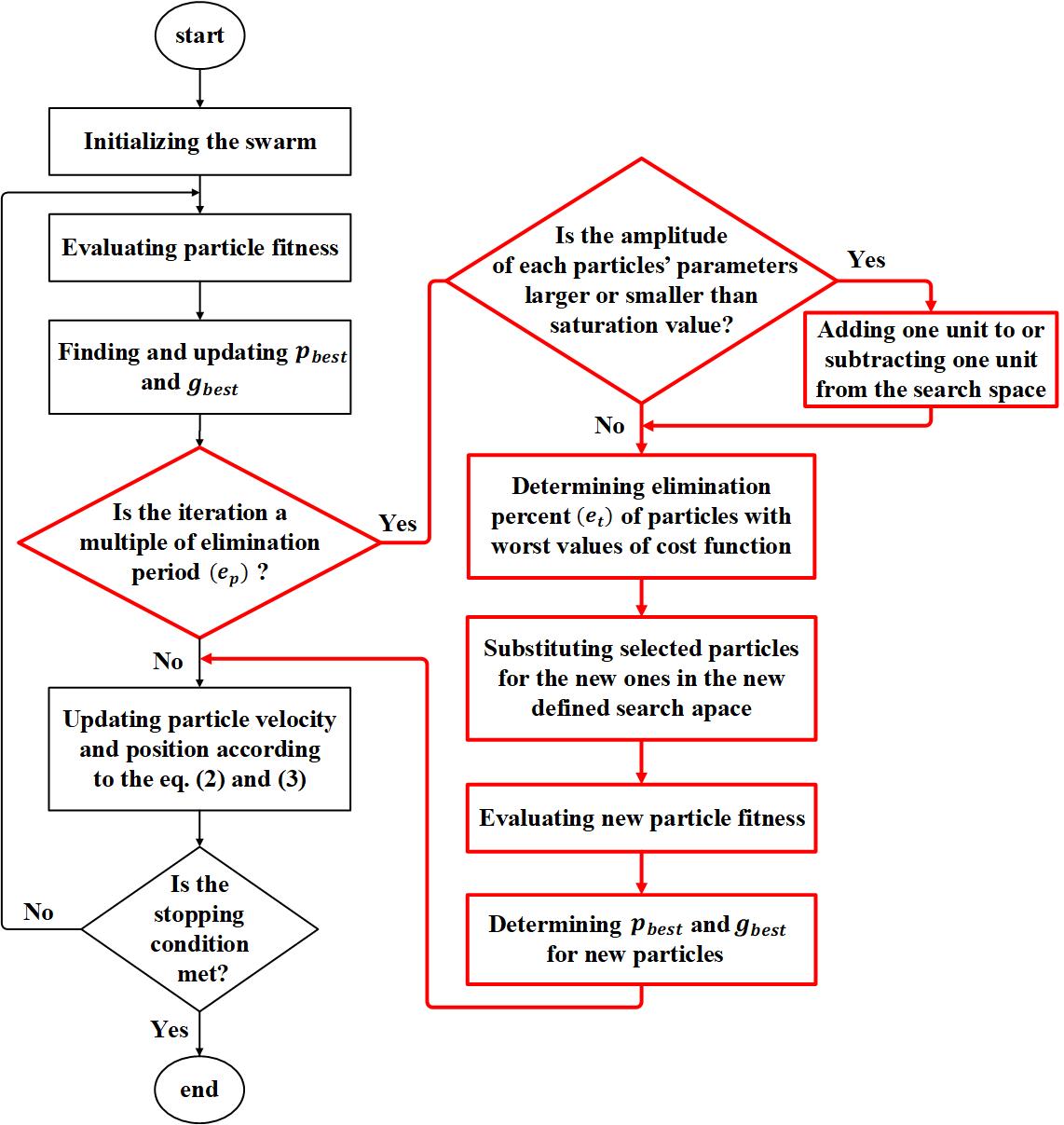}
\caption{Flowchart of the MPSO algorithm (red blocks show the parts added to the standard PSO algorithm).}\label{Fig2}
\end{figure}

In the modified version, the search space of each particle is modified based on the parameters of that particle. So after some iterations, the search space of each particle is decreased or increased independent of other particles. An adaptive search space leads to searching in a more reasonable bounded space. Furthermore, the algorithm considers an elimination phase; the new particles substitute for the poor particles. i.e., particles with the largest cost functions. This elimination phase, $e_p$, depends on the number of iterations. For example, $e_p$ set to 30 means that the elimination phase is performed every 30 iterations. \hyperref[Fig2]{Fig. \ref{Fig2}} presents the flowchart of the MPSO algorithm. The red blocks show the parts added to the standard PSO algorithm to clarify the differences. The $e_t$ in the flowchart is determined based on the percentage of the initial population and shows the number of particles to be deleted in every elimination phase. For example, if $e_t = 20$, 20\% of all particles with highest cost function values are substituted by new particles in the new search space at every elimination step. The percentage of saturation is an arbitrary value of the upper or lower bound of the search space \cite{salahshour2019designing}. For instance, if the initial bounds of the search space are considered to be -1 and 1, and the saturation value is set to 90\%, the values of each particle's parameters are checked in each elimination phase. Then, for the upper bound, 1, if these values are less than or greater than 1, the boundary is decreased to 0.9 or increased to 1.1. For the lower bound, -1, if the values are lower than or greater than -1, the boundary is decreased to -1.1 or increased to -0.9. So, when the value is lower than the bound, one unit is subtracted from the search space, and when it is greater than the bound, one unit is added. This unit is determined based on the saturation value: 100\% - 90\% (saturation value) $=$ 0.1.

\section{Simulation result}\label{sec5}
This section presents the effectiveness of the proposed adaptive fuzzy controller. We compare the performance of the MPSO and the standard PSO algorithms to optimize the considered controller for the 3-DOF helicopter. A comparison has also been drawn between the adaptive fuzzy controller and a classical PID controller.

\hyperref[table1]{Table \ref{table1}} shows the values of the 3-DOF helicopter model. The initial values of the pitch, roll, and yaw angles are set to zero.  The following equation is considered to be the desired trajectory for the roll and yaw angles \cite{chaoui2020adaptive}:
\begin{equation}\label{eq.20}
\epsilon_d , \psi_d = \frac 1 {1+e^{-2.5(t+2)}}
\end{equation}

As stated in \hyperref[sec3]{Sect. \ref{sec3}}, the value of the inertia weight $w$ should be large at the beginning to help the global search and then be reduced to find the optimal solution. Therefore, $w=1$ at the beginning, and linearly is reduced to 0.98 of its value at each iteration. The acceleration factors, $c_1$ and $c_2$, are both set to 2, and the population size is chosen to be 30. These settings are the same for both PSO and MPSO algorithms. In the MPSO algorithm, the elimination percent $e_t$ and the elimination period $e_p$ are set to 75 and 40, respectively. There are two inputs $e$ and $\dot e$, and seven Gaussian membership functions for each input, named Negative Large (NL), Negative Medium (NM), Negative Small (NS), Zero (Z), Positive Small (PS), Positive Medium (PM), and Positive Large (PL) \cite{teiar2014adaptive}. Both algorithms have been run 25 times, and the best results for each algorithm have been reported.

\begin{table}[t!]
\begin{center}
\caption{Parameter values of the optimized controllers.}\label{table2}%
\begin{tabular}{@{}p{2.5cm} p{4.5cm} p{5.5cm}@{}}
\toprule
Parameters & MPSO & PSO\\
\midrule
			$ K_\epsilon $ & 83.21 & 68.46  \\
			$ K_\psi $ & 168.53 & 157.95  \\
			$ K_\theta $ & 10.15 & 125.58 \\
			$ K_p $ & 1.78 & 5.57  \\
			$ K_d $ & 48.46 & 33.19  \\
			$ \Gamma_\epsilon = \Gamma_\psi $ & diag\,[79 68 33  0 79 42 76] & diag\,[70 102 77 125 55 13 112] \\
			$ \Gamma_\theta $ & diag\,[53 48 11 49 3 88 19] & diag\,[172 100 119 185 70 155 113] \\
			$ {\text c_{NL}}_{e} $ & -2.80 & -120.92 \\
			$ {\text c_{NM}}_e $ & -0.00 & -48.60  \\
			$ {\text c_{NS}}_{e} $ & -0.00 & -0.04  \\
			$ {\text c_{Z}}_{e} $ & 0.00 & 0.00  \\
			$ {\text c_{PS}}_{e} $ & $ {-\text c_{NS}}_{e} $ & $ {-\text c_{NS}}_{e} $ \\
			$ {\text c_{PM}}_{e} $ & $ {-\text c_{NM}}_{e} $ & $ {-\text c_{NM}}_{e} $ \\
			$ {\text c_{PL}}_{e} $ & $ {-\text c_{NM}}_{e} $ & $ {-\text c_{NM}}_{e} $ \\
			$ {\text c_{NL}}_{\dot e} $ & -9.51 & -129.19 \\
			$ {\text c_{NM}}_{\dot e} $ & -6.32 & -125.02  \\
			$ {\text c_{NS}}_{\dot e} $ & -0.97 & -0.75 \\
			$ {\text c_{Z}}_{\dot e} $ & 0.00 & 0.00  \\
			$ {\text c_{PS}}_{\dot e} $ & $ {-\text c_{NS}}_{\dot e} $ & $ {-\text c_{NS}}_{\dot e} $  \\
			$ {\text c_{PM}}_{\dot e} $ & $ {-\text c_{NM}}_{\dot e} $ & $ {-\text c_{NM}}_{\dot e} $  \\
			$ {\text c_{PL}}_{\dot e} $ & $ {-\text c_{NM}}_{\dot e} $ & $ {-\text c_{NM}}_{\dot e} $ \\
			RMSE & 0.187 & 0.236 \\
\bottomrule
\end{tabular}
\end{center}
\end{table}

\hyperref[table2]{Table \ref{table2}} represents the final optimized values of controllers' parameters with the cost function values. The centers $c$ of membership functions for all three controllers are the same, and sigmas are considered to be 1. As the table shows, the centers of membership functions have substantially high values when optimized with the PSO algorithm. This happens when we already know the approximate boundary of some parameters, but we cannot define it in the algorithm. Indeed, defining the upper and lower bound of the search space for each parameter is possible in the MPSO algorithm, allowing it to avoid falling into local minimums.

\hyperref[Fig4]{Fig. \ref{Fig4}}, \hyperref[Fig4]{Fig. \ref{Fig4.2}}, and \hyperref[Fig4]{Fig. \ref{Fig4.3}} present the yaw, pitch, and roll angles, the tracking errors, and the control signals for the nominal case. In these figures, the adaptive fuzzy logic controller's performance has been compared with a classical PID controller's performance. The figures reveal a steady-state error of less than 0.02 for the roll and yaw angles with the PID controller. The system's settling time (for all three angles) is higher for the PID controller. The adaptive fuzzy logic controller improves tracking when optimized with PSO and MPSO algorithms. The PID controller's only priority over the adaptive fuzzy controller optimized by the PSO algorithm is its lower control effort. Overall, these figures prove the superiority of the adaptive fuzzy logic controller compared to the PID controller.

\begin{figure}[t!]%
\centering
		\subfigure[]{
			\includegraphics[width=7cm]{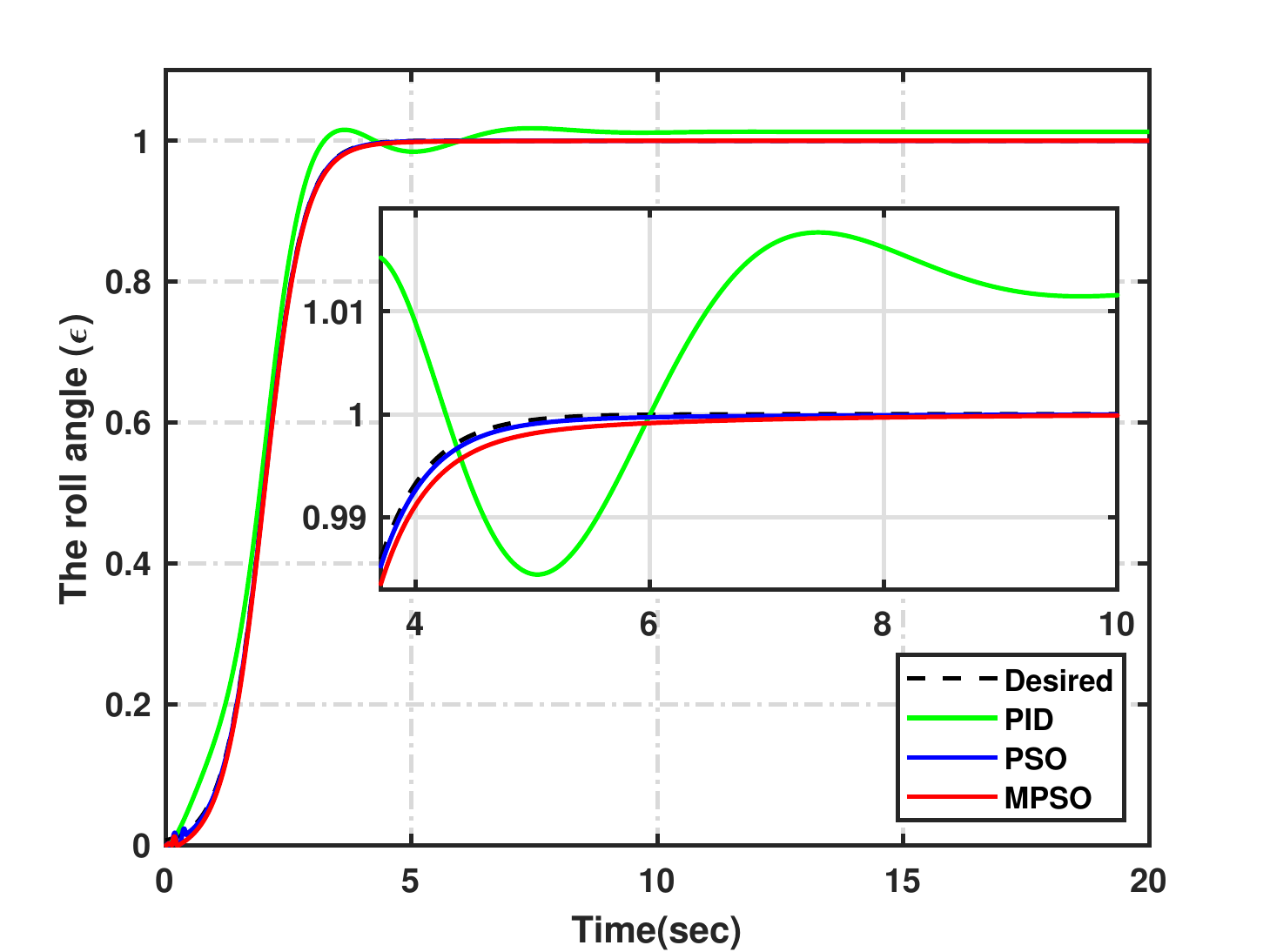}
			\label{Fig4a}	
		}
		\subfigure[]{
			\includegraphics[width=7cm]{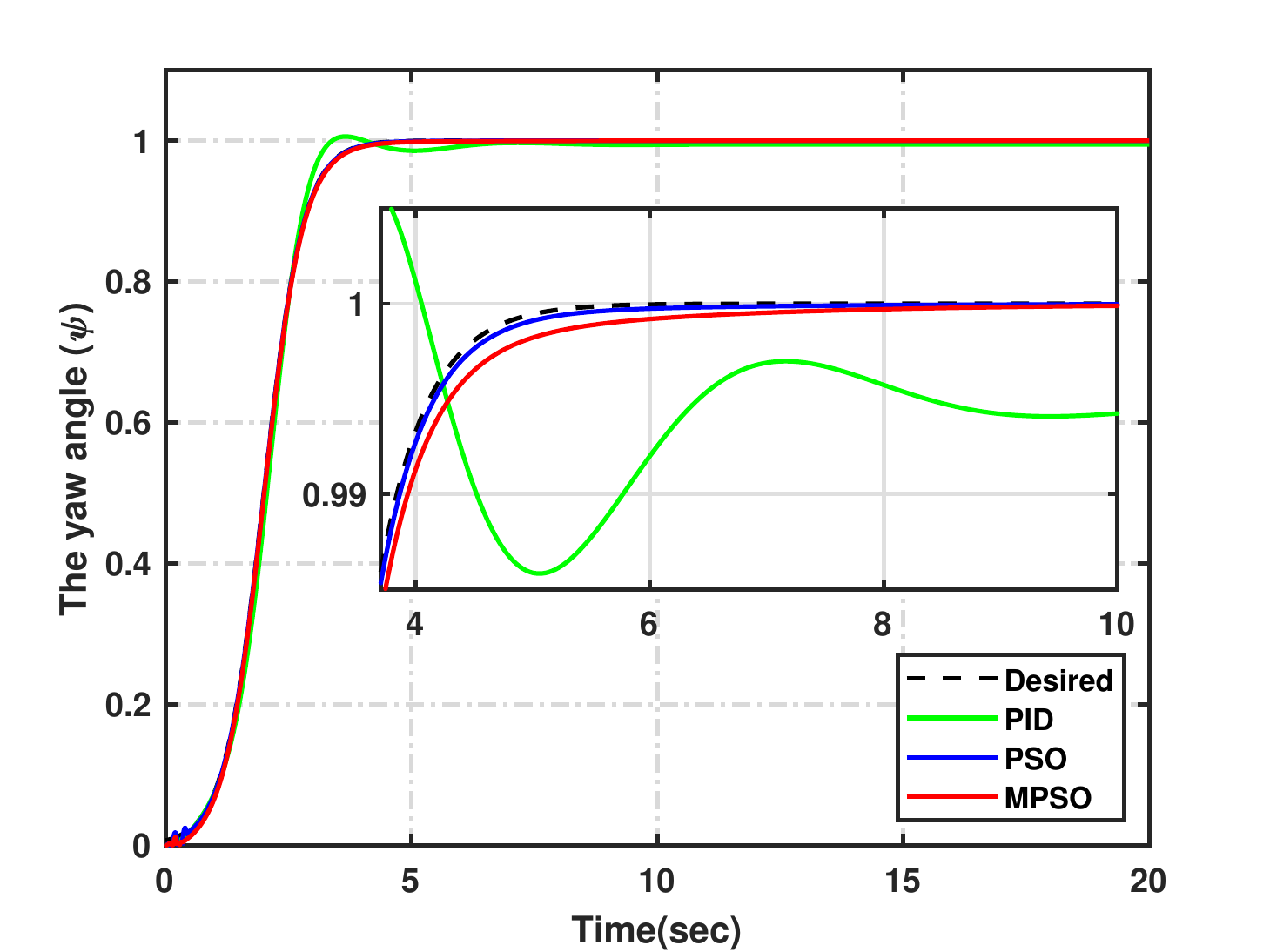}
			\label{Fig4b}
		}
		\subfigure[]{
			\includegraphics[width=7cm]{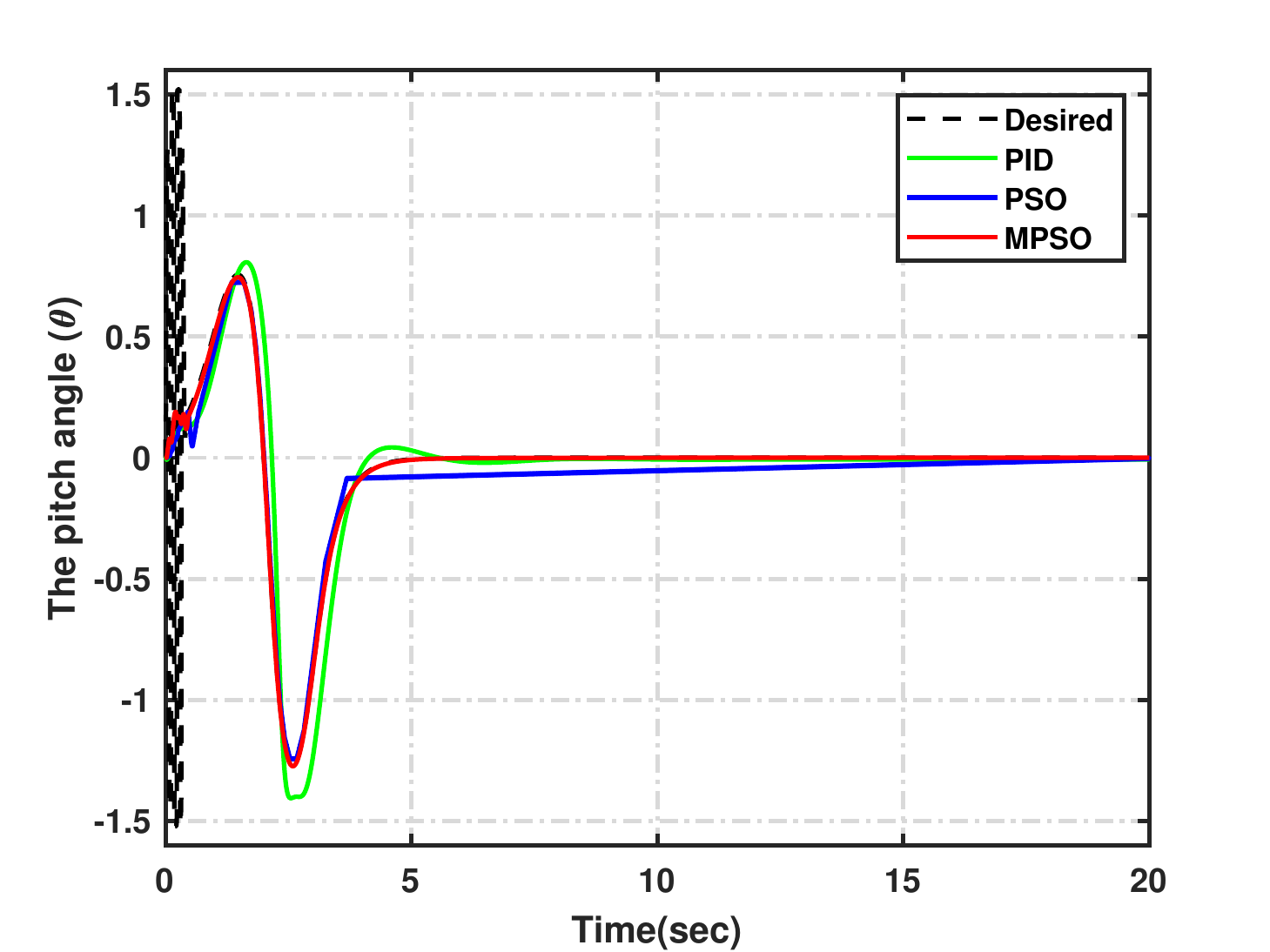}
			\label{Fig4c}
		}
\caption{Simulation results in the nominal case: outputs of the system.}\label{Fig4}
\end{figure}
\begin{figure}[t!]%
\centering
		\subfigure[]{
			\includegraphics[width=7cm]{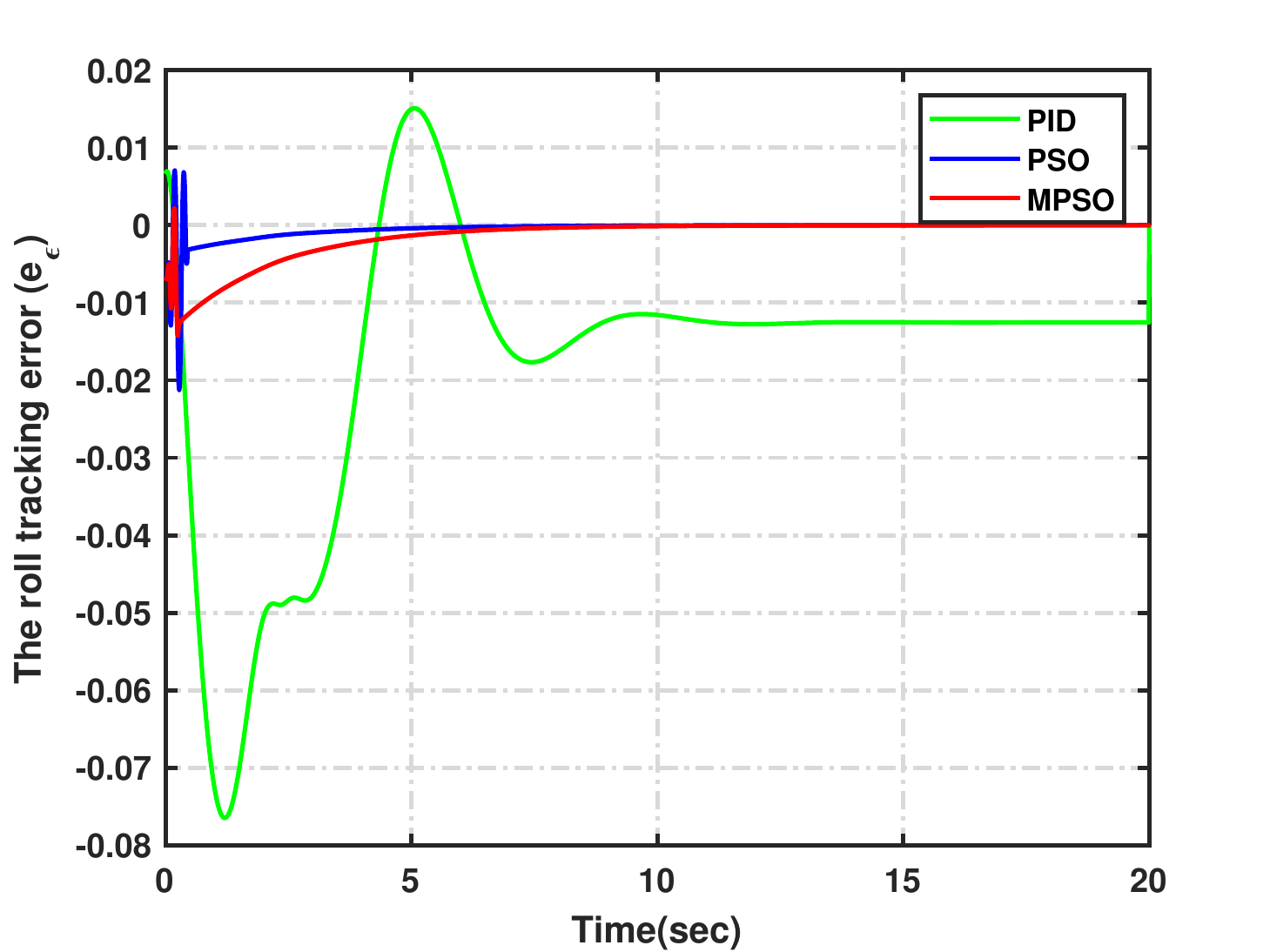}
			\label{Fig4d}	
		}
		\subfigure[]{
			\includegraphics[width=7cm]{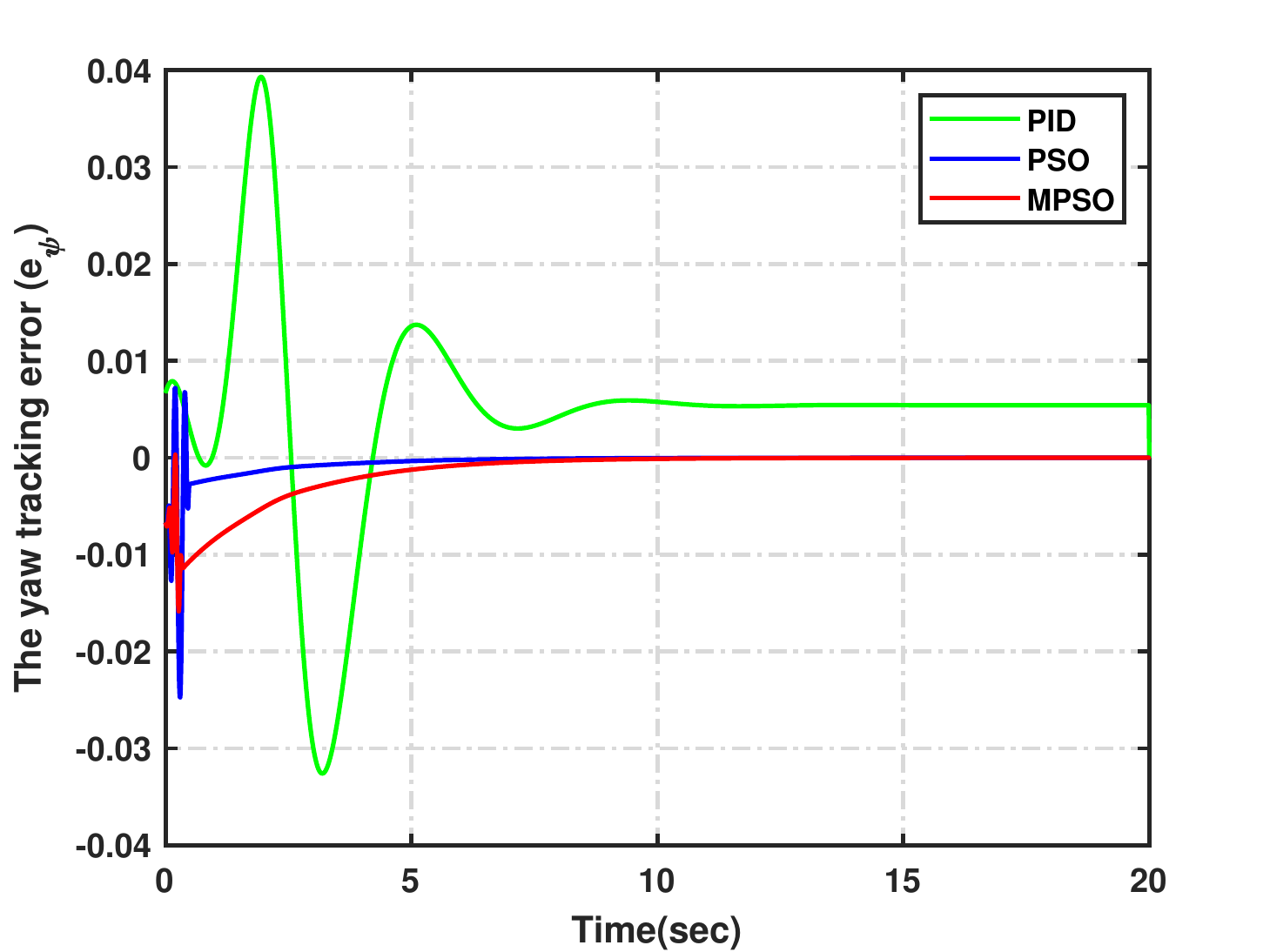}
			\label{Fig4e}
		}
		\subfigure[]{
			\includegraphics[width=7cm]{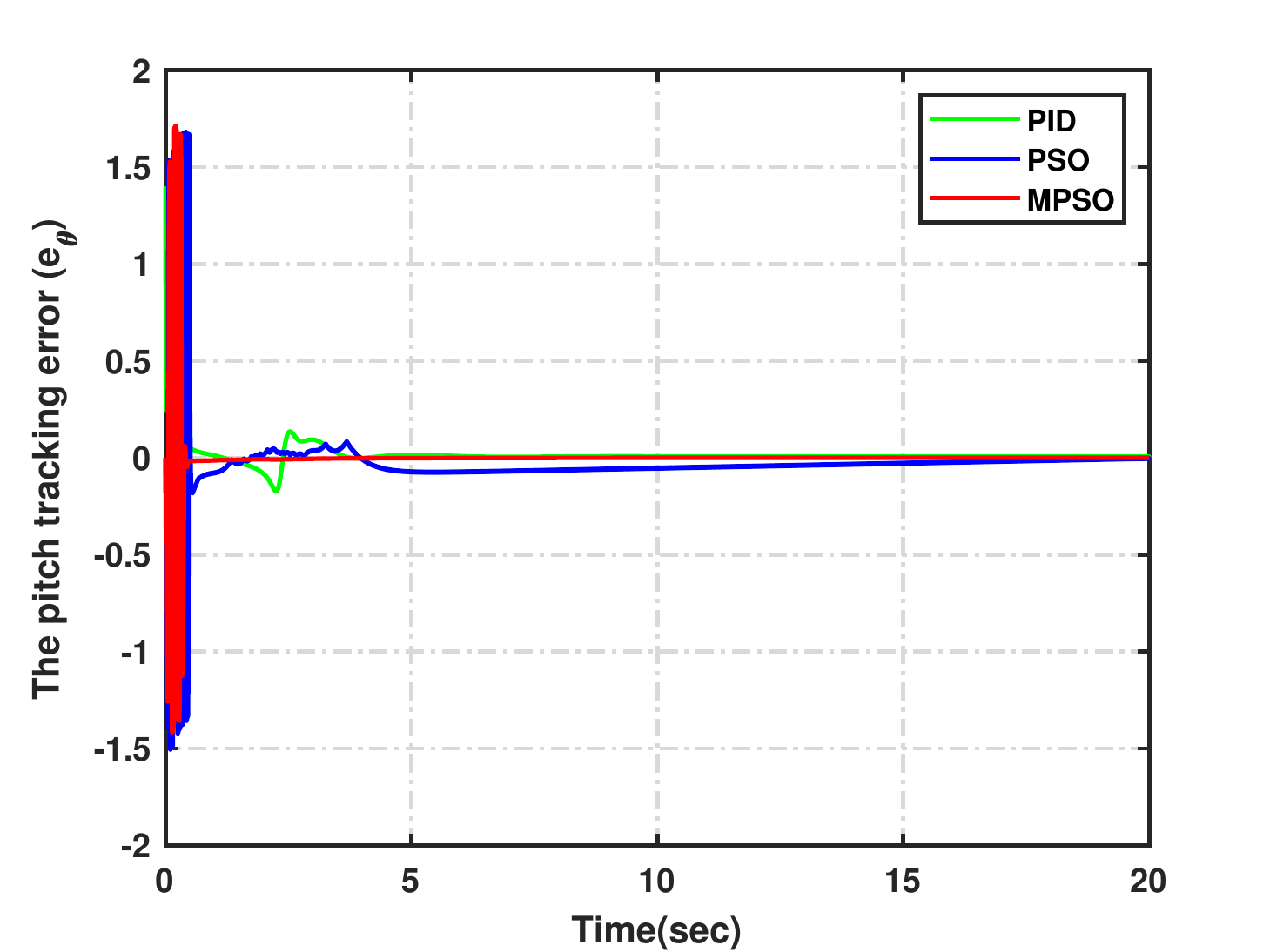}
			\label{Fig4f}
		}
\caption{Simulation results in the nominal case: motion tracking errors.}\label{Fig4.2}
\end{figure}
\begin{figure}[t!]%
\centering
		\subfigure[]{
			\includegraphics[width=7cm]{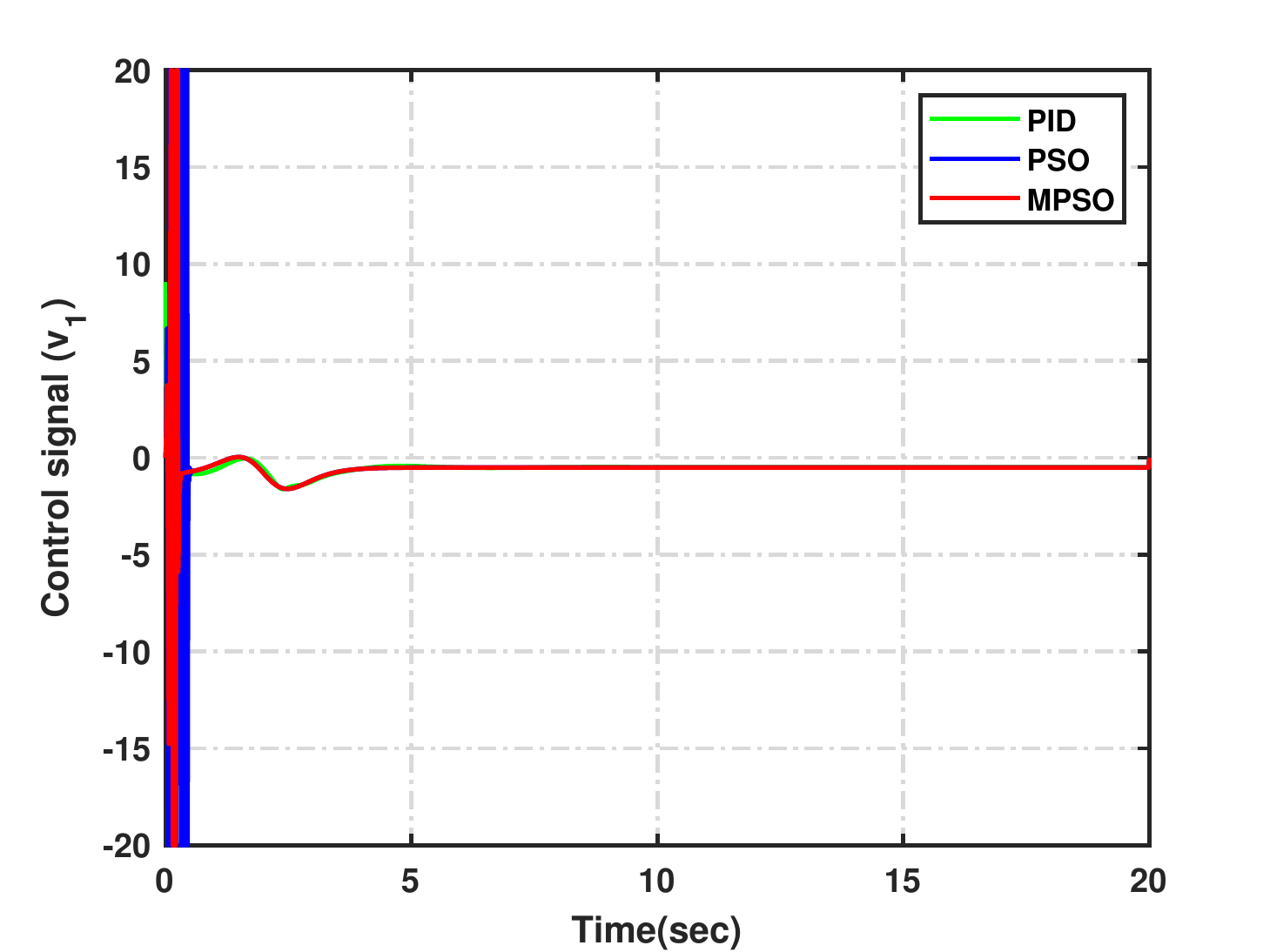}
			\label{Fig4g}	
		}
		\subfigure[]{
			\includegraphics[width=7cm]{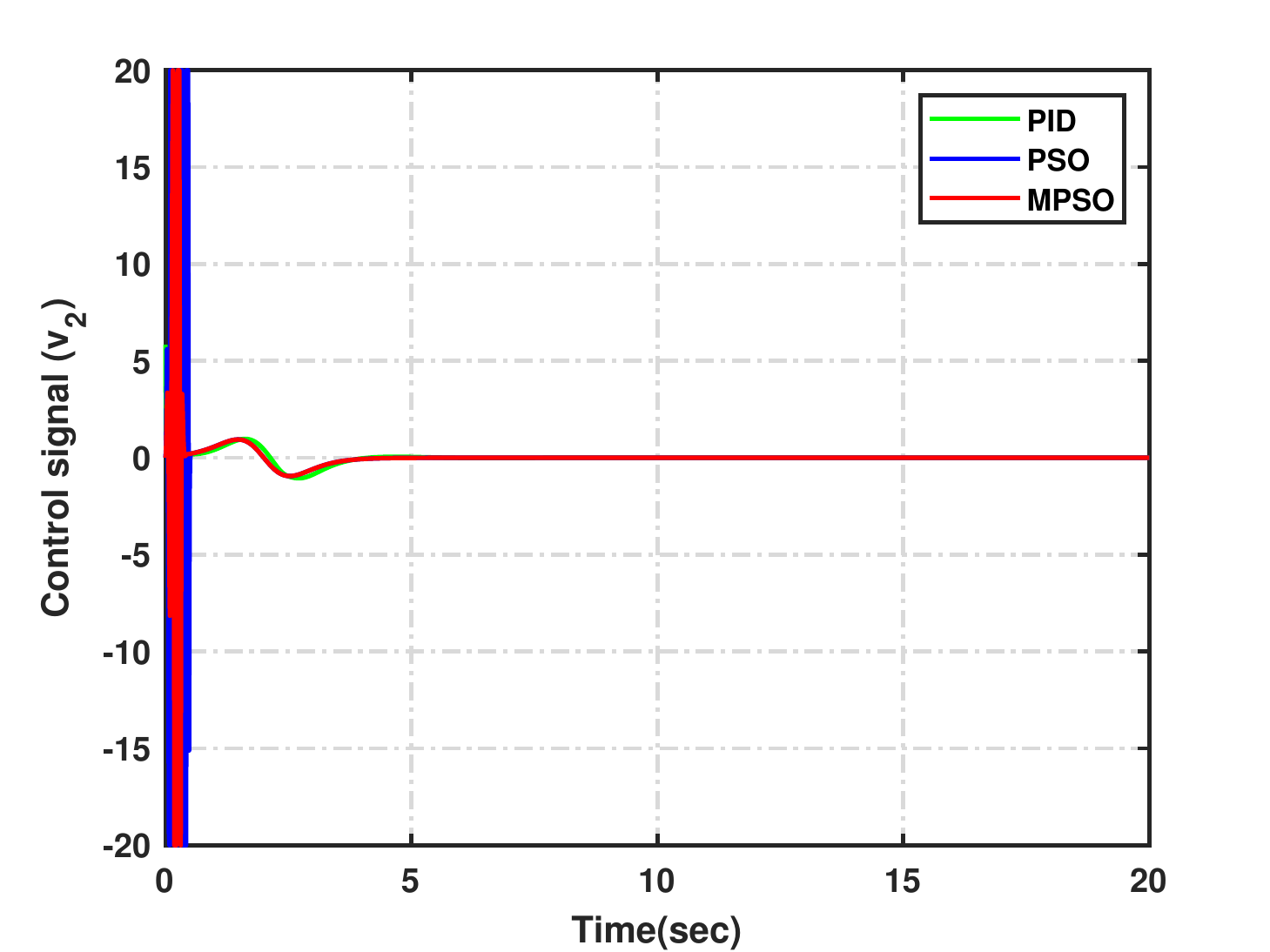}
			\label{Fig4h}
		}
		\subfigure[]{
			\includegraphics[width=7cm]{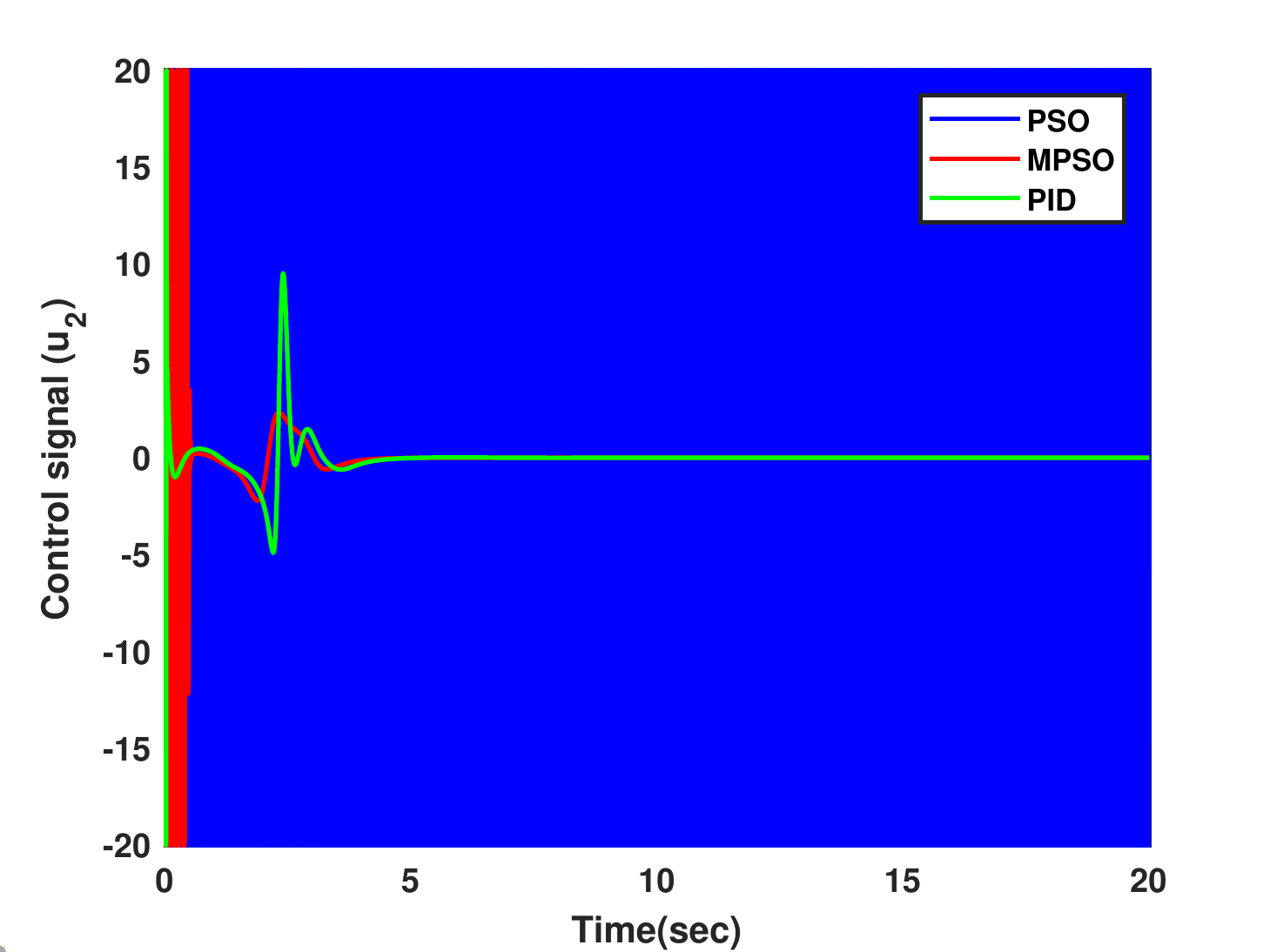}
			\label{Fig4i}
		}
\caption{Simulation results in the nominal case: control signals.}\label{Fig4.3}
\end{figure}

Comparing the performance of PSO and MPSO algorithms on \hyperref[Fig4]{Fig. \ref{Fig4}}, \hyperref[Fig4]{Fig. \ref{Fig4.2}}, and \hyperref[Fig4]{Fig. \ref{Fig4.3}}, it can be seen that their results are almost the same for the roll and yaw angles, with the PSO algorithm having less settling time. But the difference increases for the pitch angle, where the MPSO algorithm has much less tracking error.
Consequently, the control signals for the PSO algorithm fluctuate more, especially for the pitch angle. The main reason for this fluctuation in the control signal is the incapability of the PSO algorithm in defining separate search spaces for different parameters and modifying these search spaces based on the value of each parameter. In other words, it easily gets trapped into local minimums when there are many local minimums or many parameters to optimize.

To further investigate the algorithms' performance, the results are compared in \hyperref[table3]{Table \ref{table3}} based on the integral of the absolute values of control signals (IACS), which is calculated as follows:
\begin{equation} \label{eq.21}
IACS = \int_0^T (\vert v_1\vert + \vert v_2\vert + \vert u_2\vert ) \,dt ,
\end{equation}
where $v_1$, $v_2$, and $u_2$ are the roll, yaw, and pitch control signals, and $T$ is the final time instant. Notice that IACS is only a metric to compare the performance of algorithms and not the objective function. It has been computed after optimizing the parameters of controllers and applying them to the model. As the results show, the MPSO has a lower and 
IACS value, and therefore, better performance.

\begin{table}[h!]
\begin{center}
\caption{Comparison between the performance of PSO and MPSO algorithms.}\label{table3}%
\begin{tabular}{@{}p{3cm} p{2cm} @{}}
\toprule
 & IACS \\
\midrule
			PSO & 213.429 \\
			MPSO & 15.177\\
\bottomrule
\end{tabular}
\end{center}
\end{table}

To demonstrate the robustness of the controllers, we tested them under parametric uncertainties. For this purpose, we decreased the system's mass to half of its nominal case, and in another case, we increased it to one and a half times the nominal case, as it was done in \cite{chaoui2020adaptive}.  \hyperref[Fig5]{Fig. \ref{Fig5}} to \hyperref[Fig6]{Fig. \ref{Fig6.3}} depict the results. The roll angle for the PSO algorithm fluctuates more than the nominal case, leading to more fluctuation in the control signal. Moreover, the pitch angle for the PSO algorithm does not track the desired trajectory. But for the MPSO algorithm, results are identical to the nominal case, demonstrating the controller's robustness facing uncertainties. The only difference is the magnitude of the $u_2$ to adjust the mass change. It decreases when the system's mass is half of its nominal value and increases when the mass has increased one and a half times. 

\begin{figure}[t!]%
\centering
		\subfigure[]{
			\includegraphics[width=7cm]{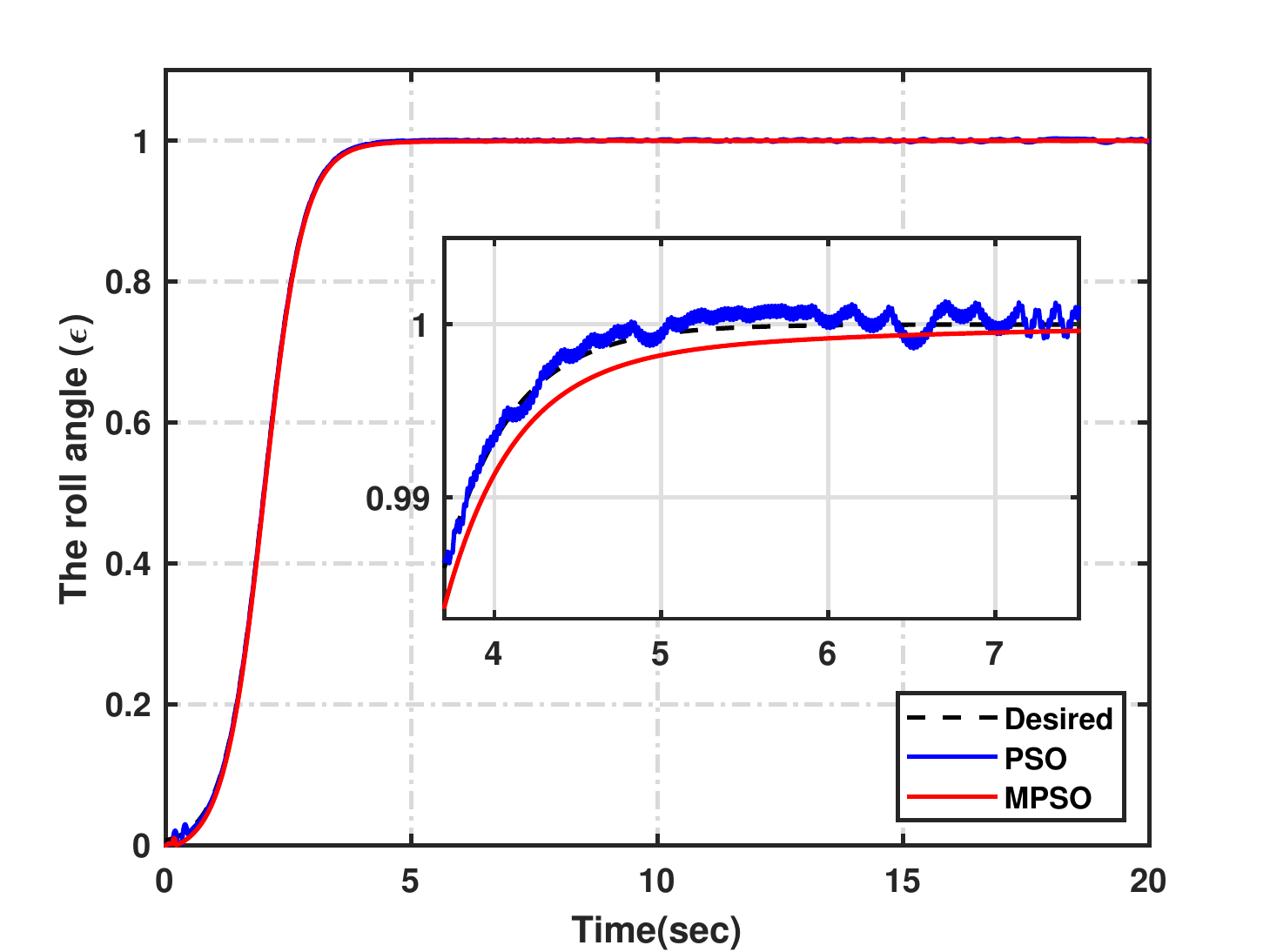}
			\label{Fig5a}	
		}
		\subfigure[]{
			\includegraphics[width=7cm]{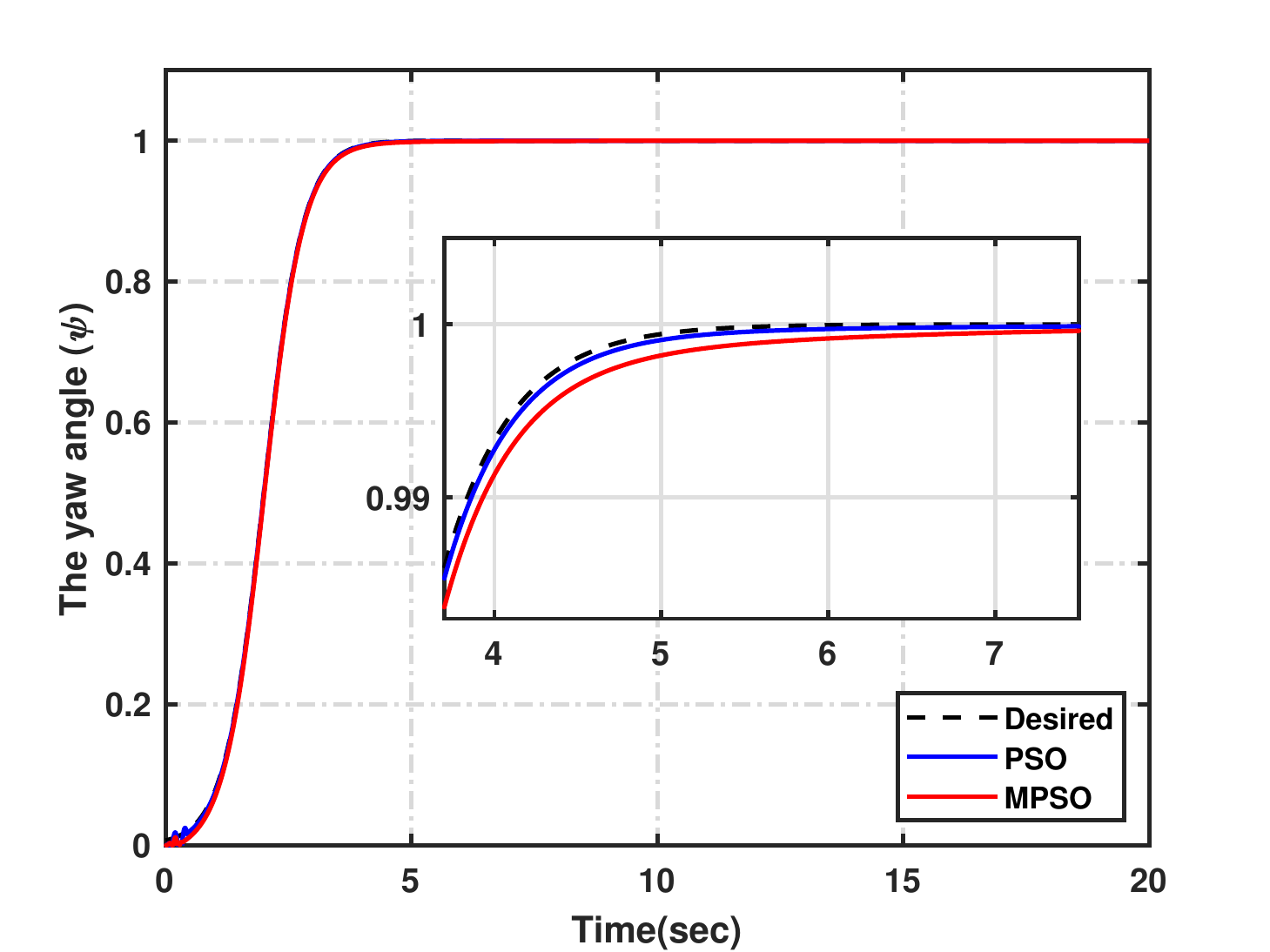}
			\label{Fig5b}
		}
		\subfigure[]{
			\includegraphics[width=7cm]{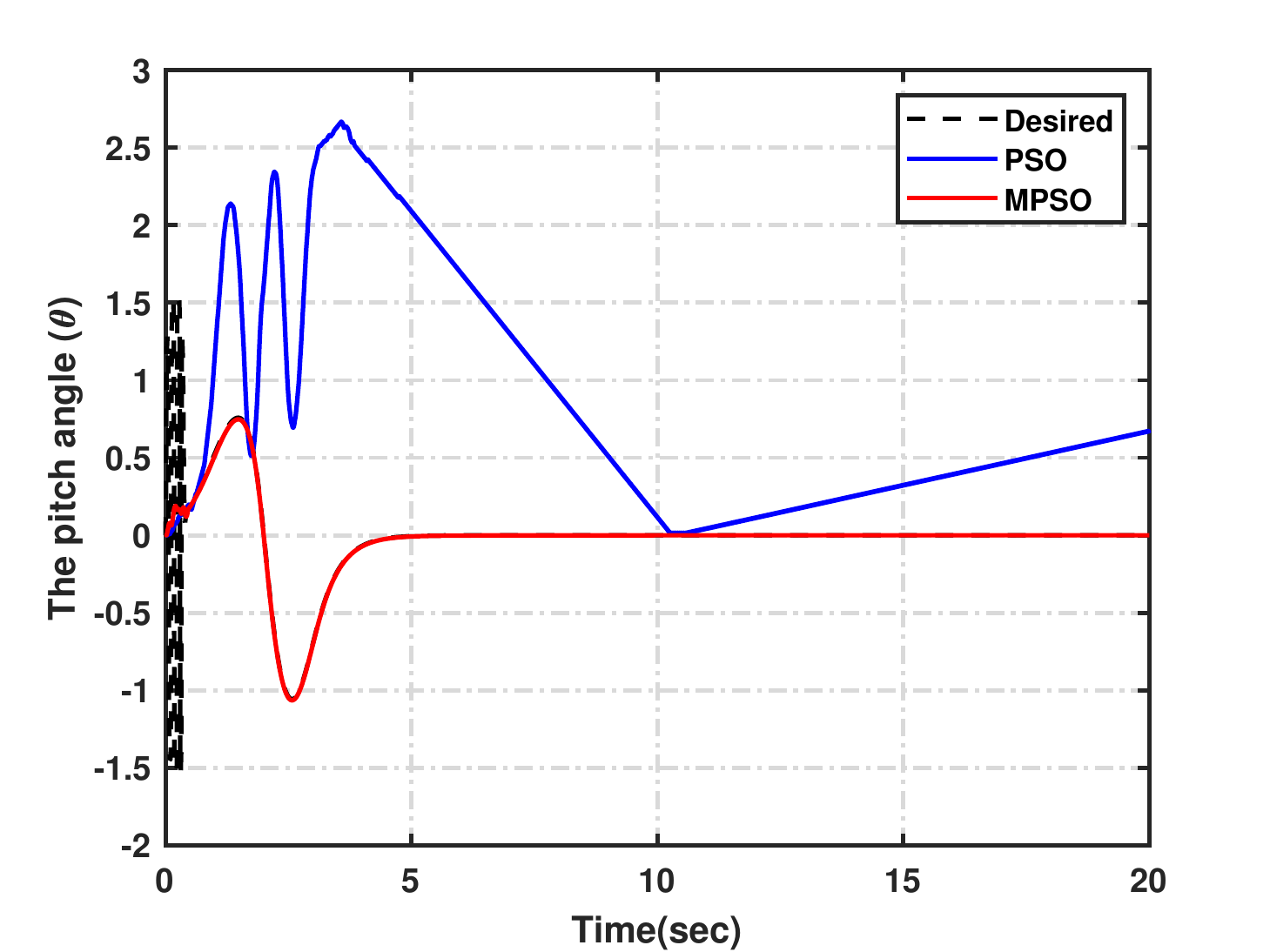}
			\label{Fig5c}
		}
\caption{Simulation results under the helicopter's mass change (half mass): outputs of the system.}\label{Fig5}
\end{figure}
\begin{figure}[t!]%
\centering
		\subfigure[]{
			\includegraphics[width=7cm]{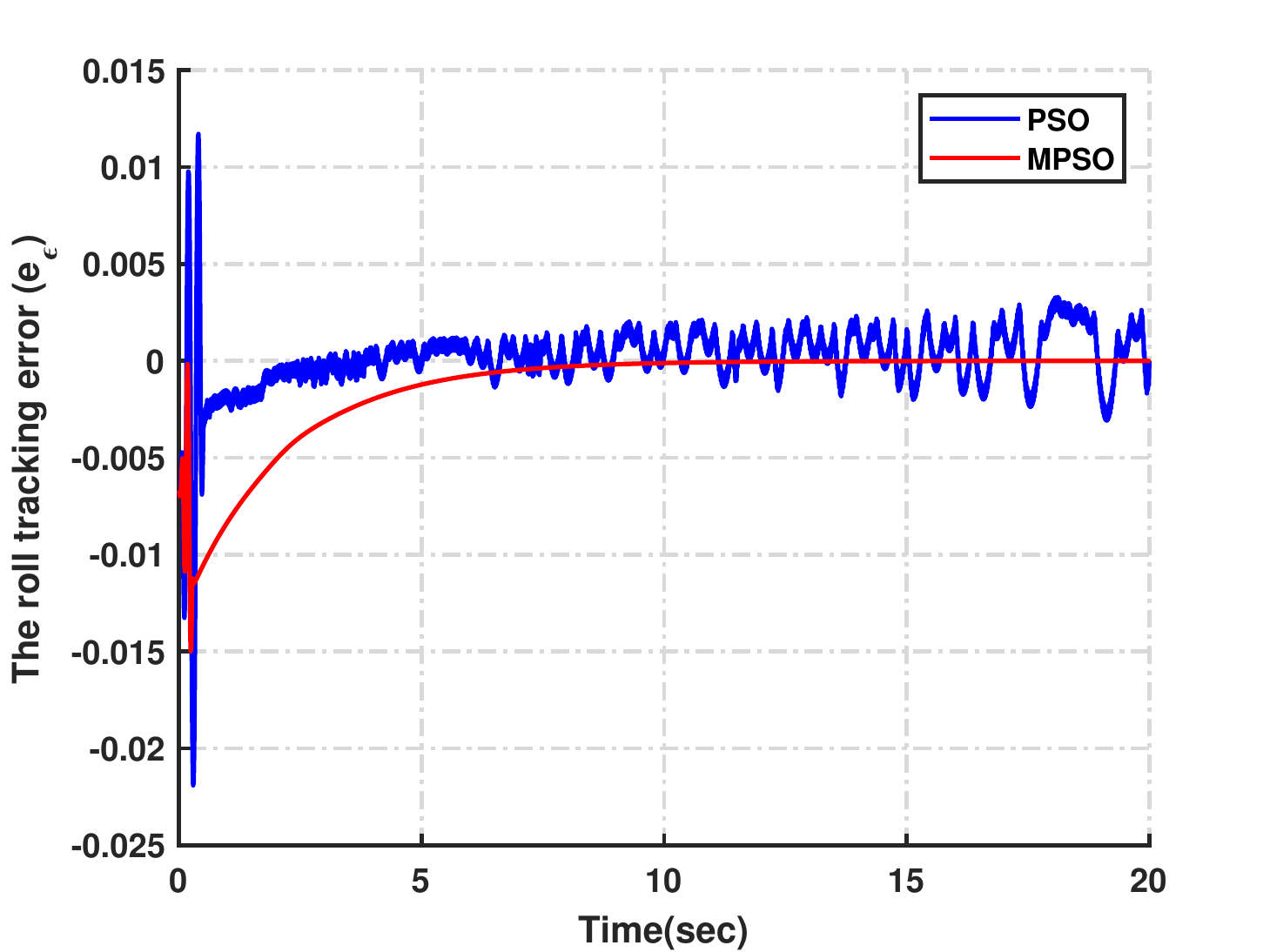}
			\label{Fig5d}	
		}
		\subfigure[]{
			\includegraphics[width=7cm]{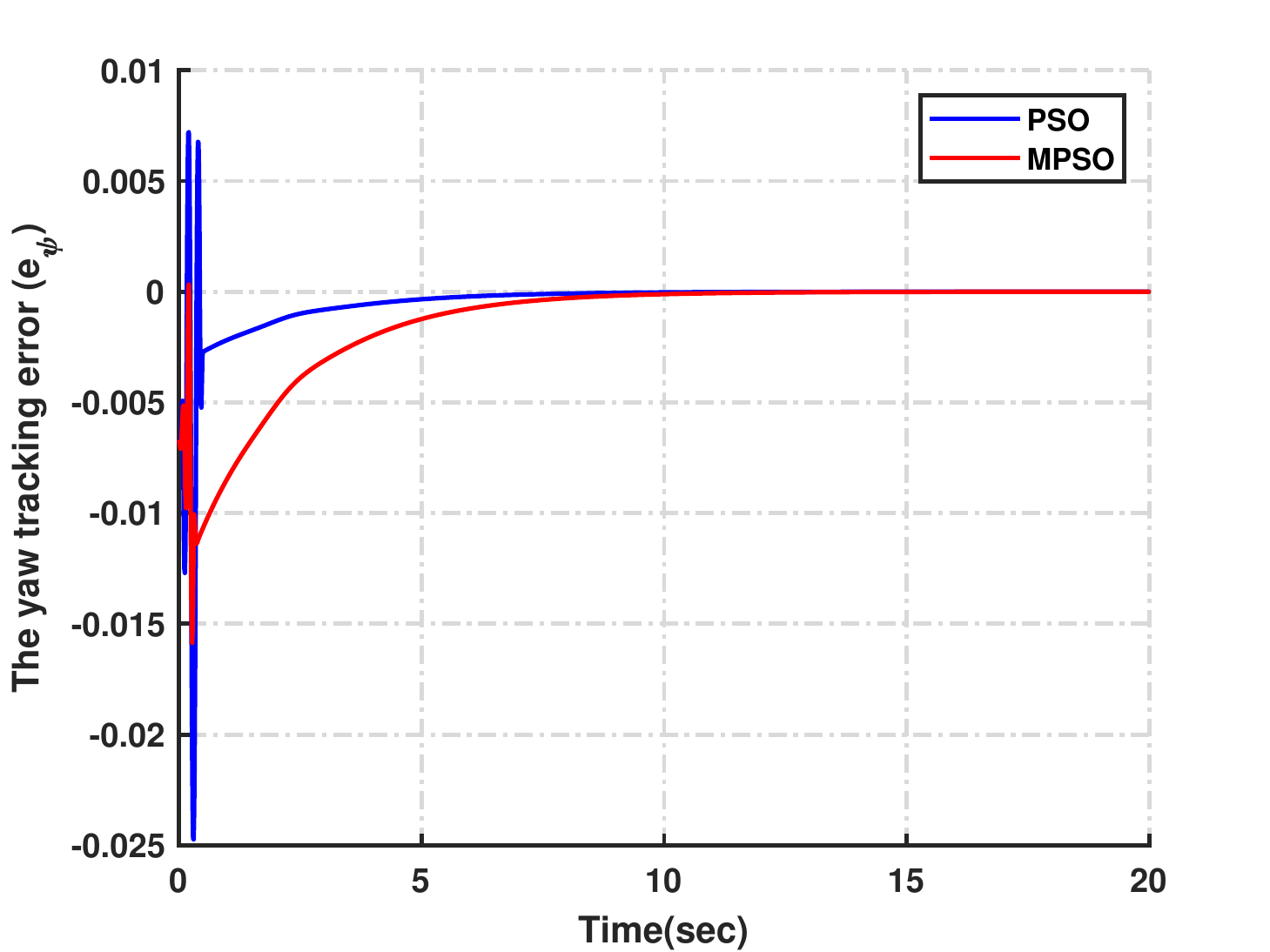}
			\label{Fig5e}
		}
		\subfigure[]{
			\includegraphics[width=7cm]{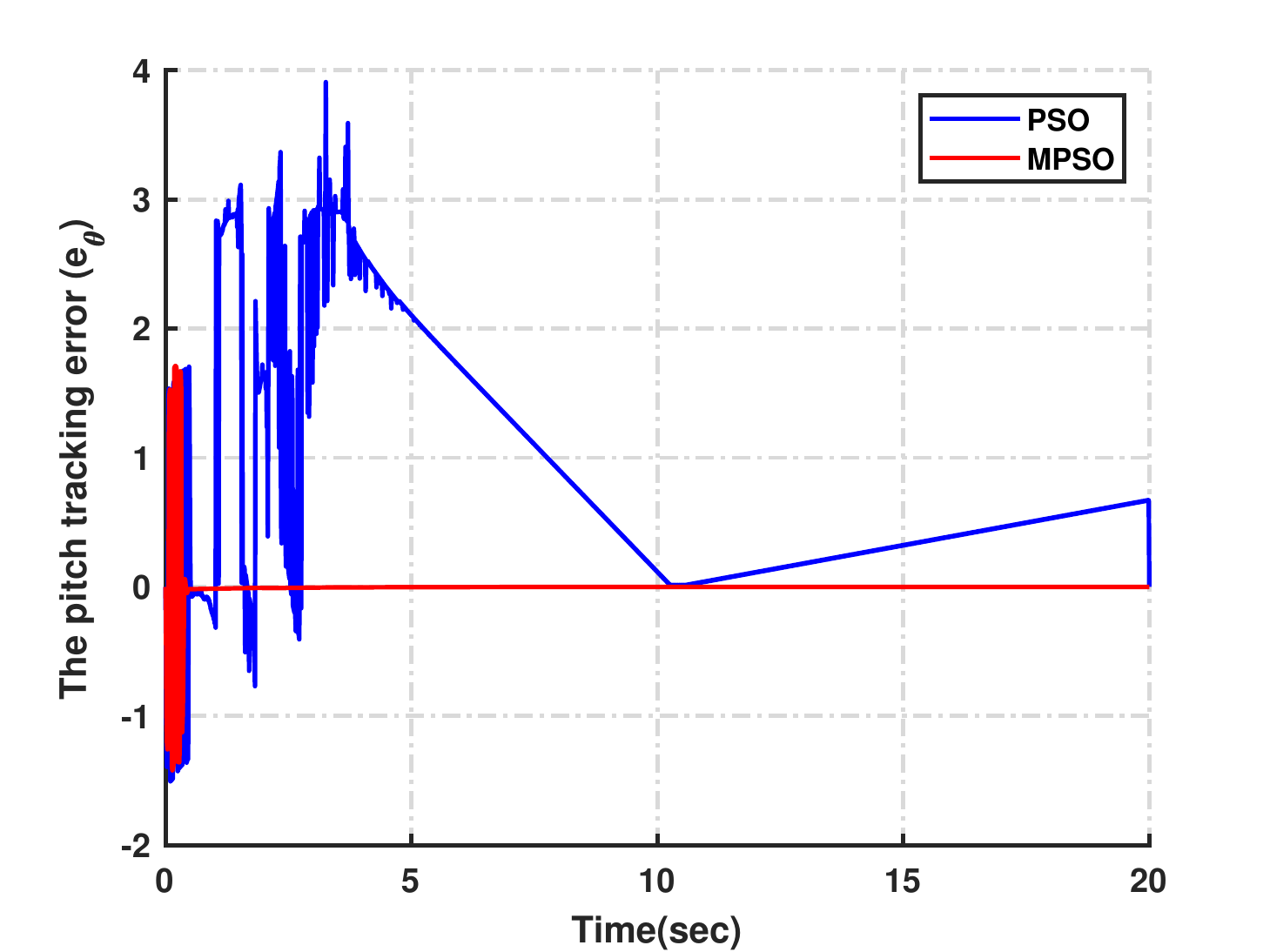}
			\label{Fig5f}
		}
\caption{Simulation results under the helicopter's mass change (half mass): motion tracking errors.}\label{Fig5.2}
\end{figure}
\begin{figure}[t!]%
\centering
		\subfigure[]{
			\includegraphics[width=7cm]{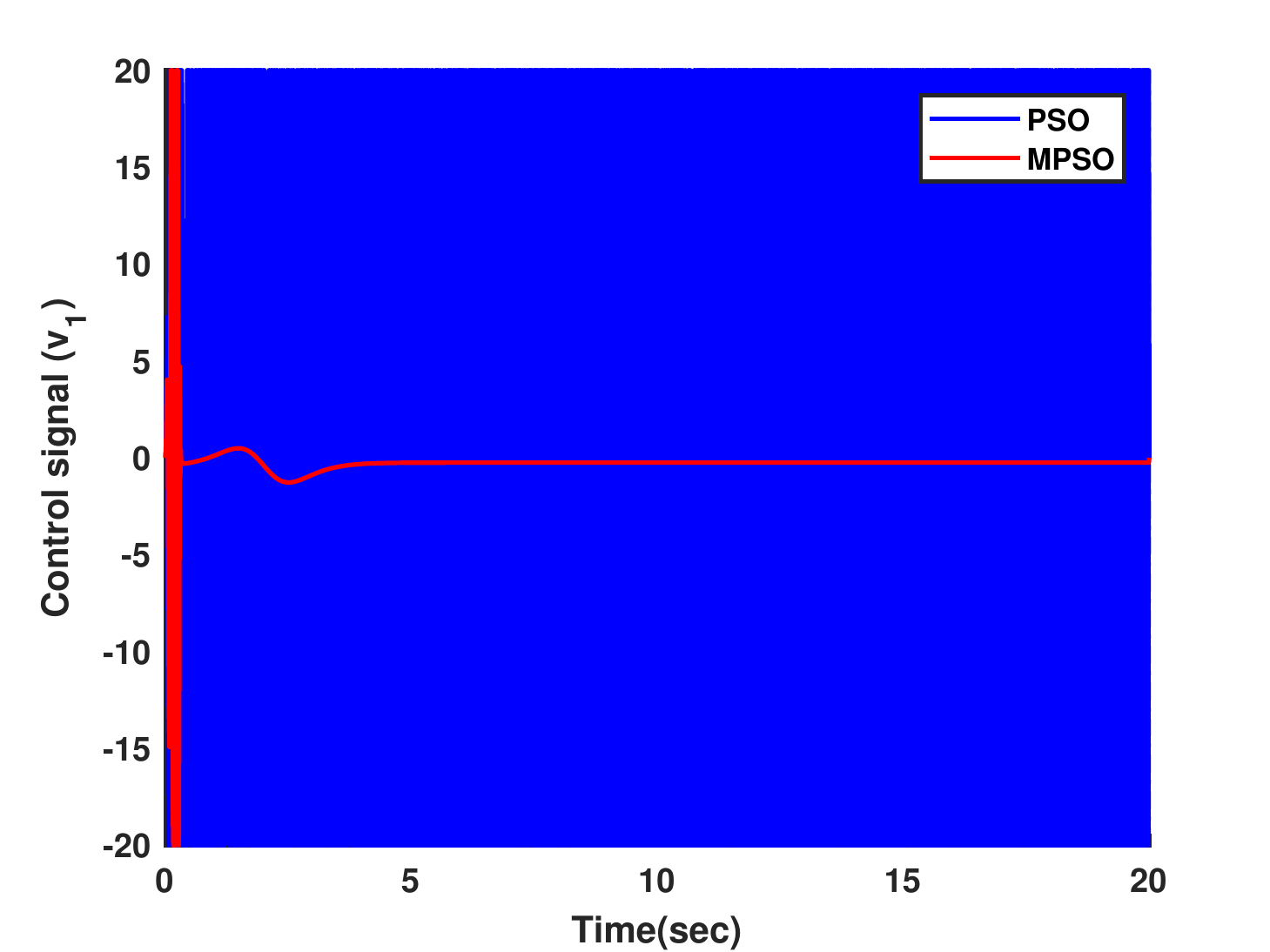}
			\label{Fig5g}	
		}
		\subfigure[]{
			\includegraphics[width=7cm]{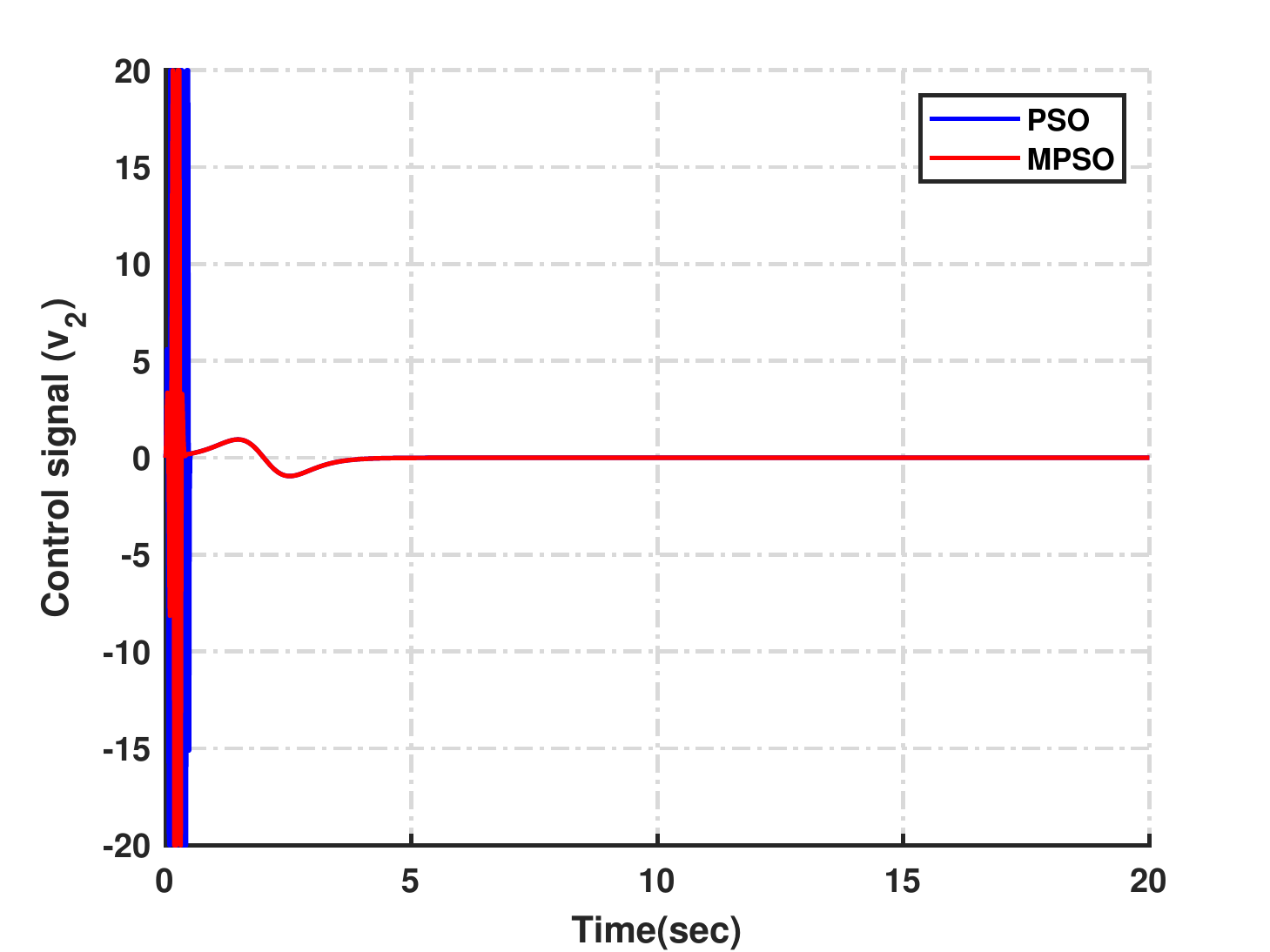}
			\label{Fig5h}
		}
		\subfigure[]{
			\includegraphics[width=7cm]{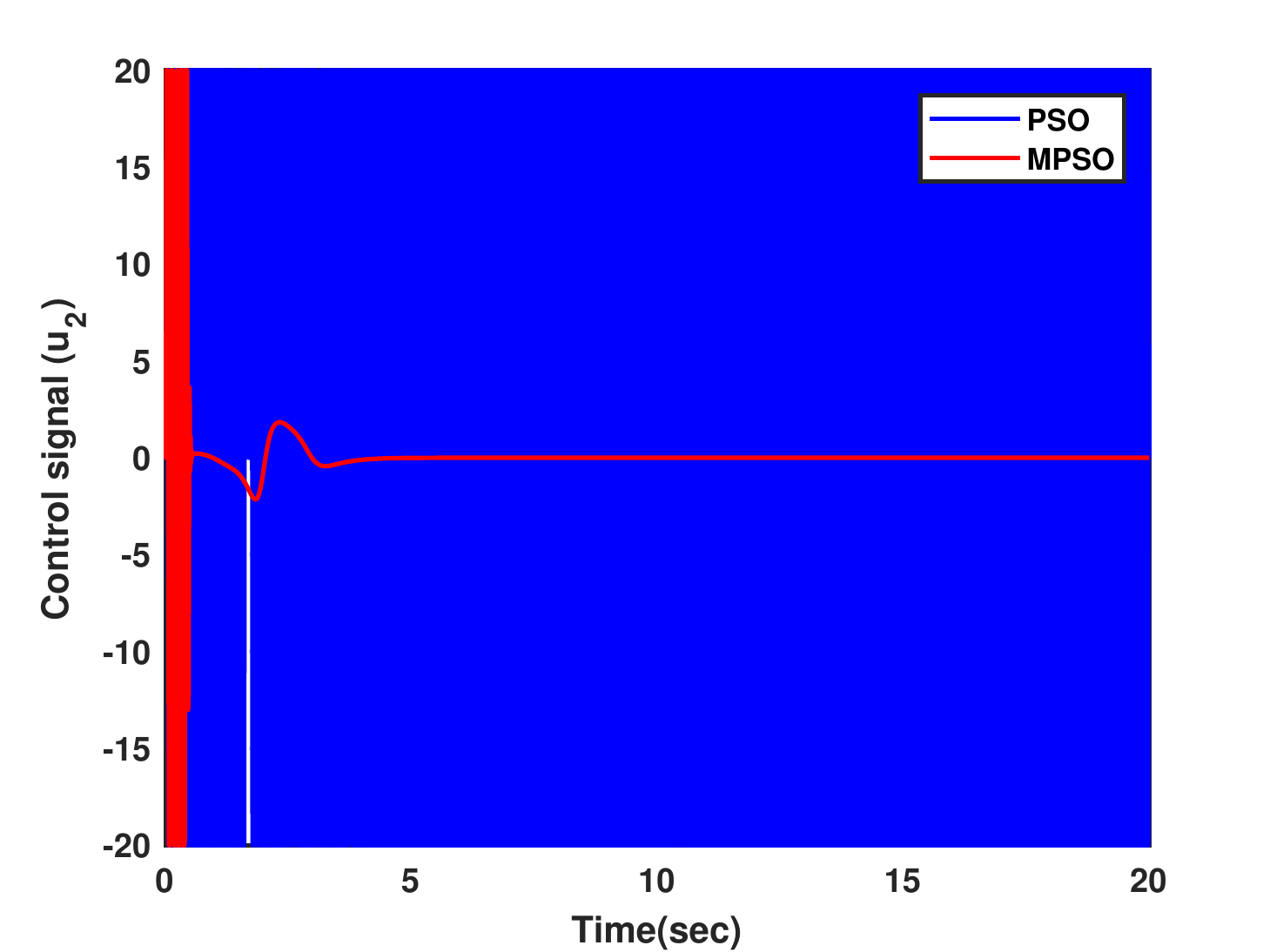}
			\label{Fig5i}
		}
\caption{Simulation results under the helicopter's mass change (half mass): control signals.}\label{Fig5.3}
\end{figure}

\begin{figure}[t!]%
\centering
		\subfigure[]{
			\includegraphics[width=7cm]{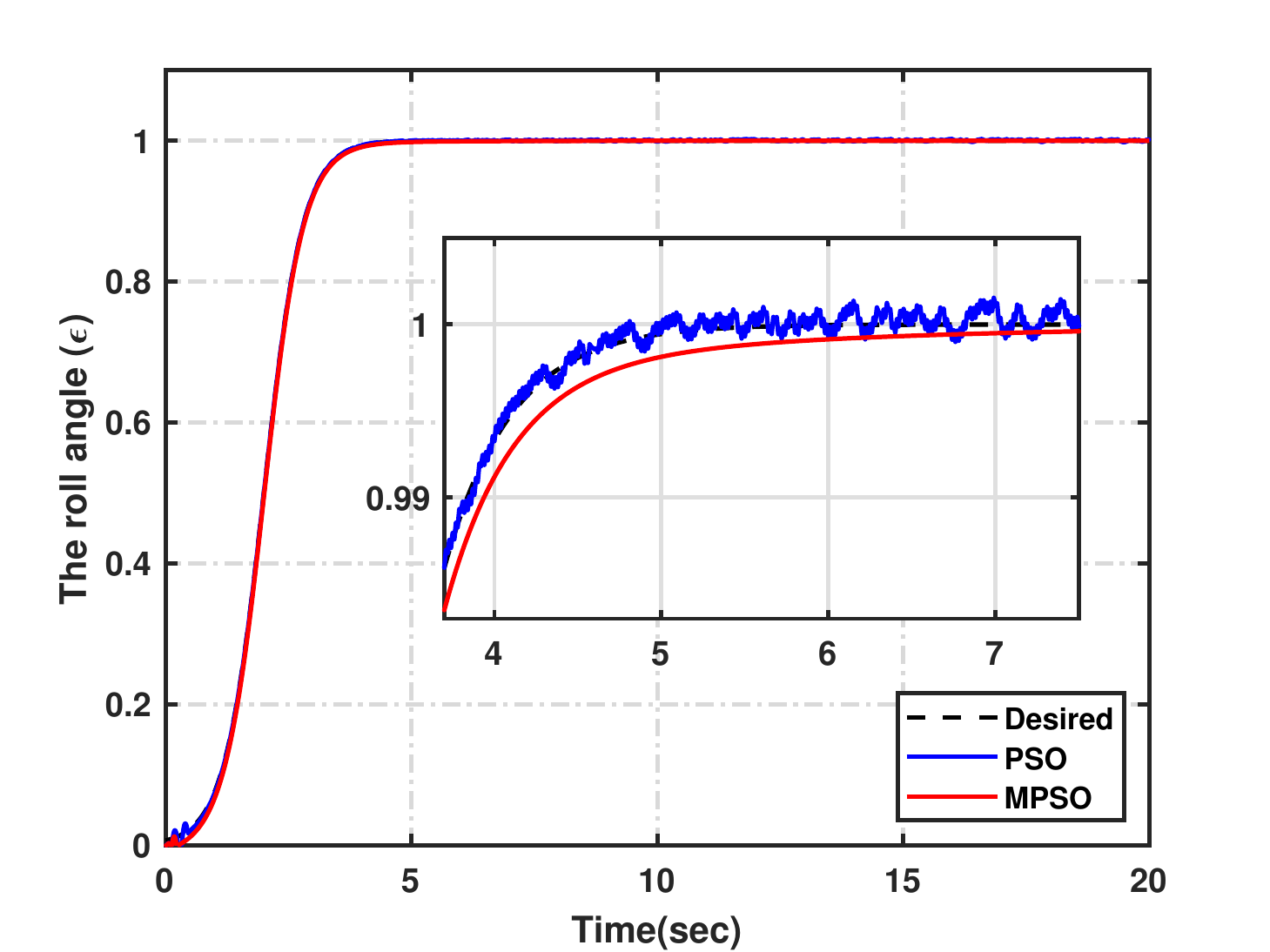}
			\label{Fig6a}	
		}
		\subfigure[]{
			\includegraphics[width=7cm]{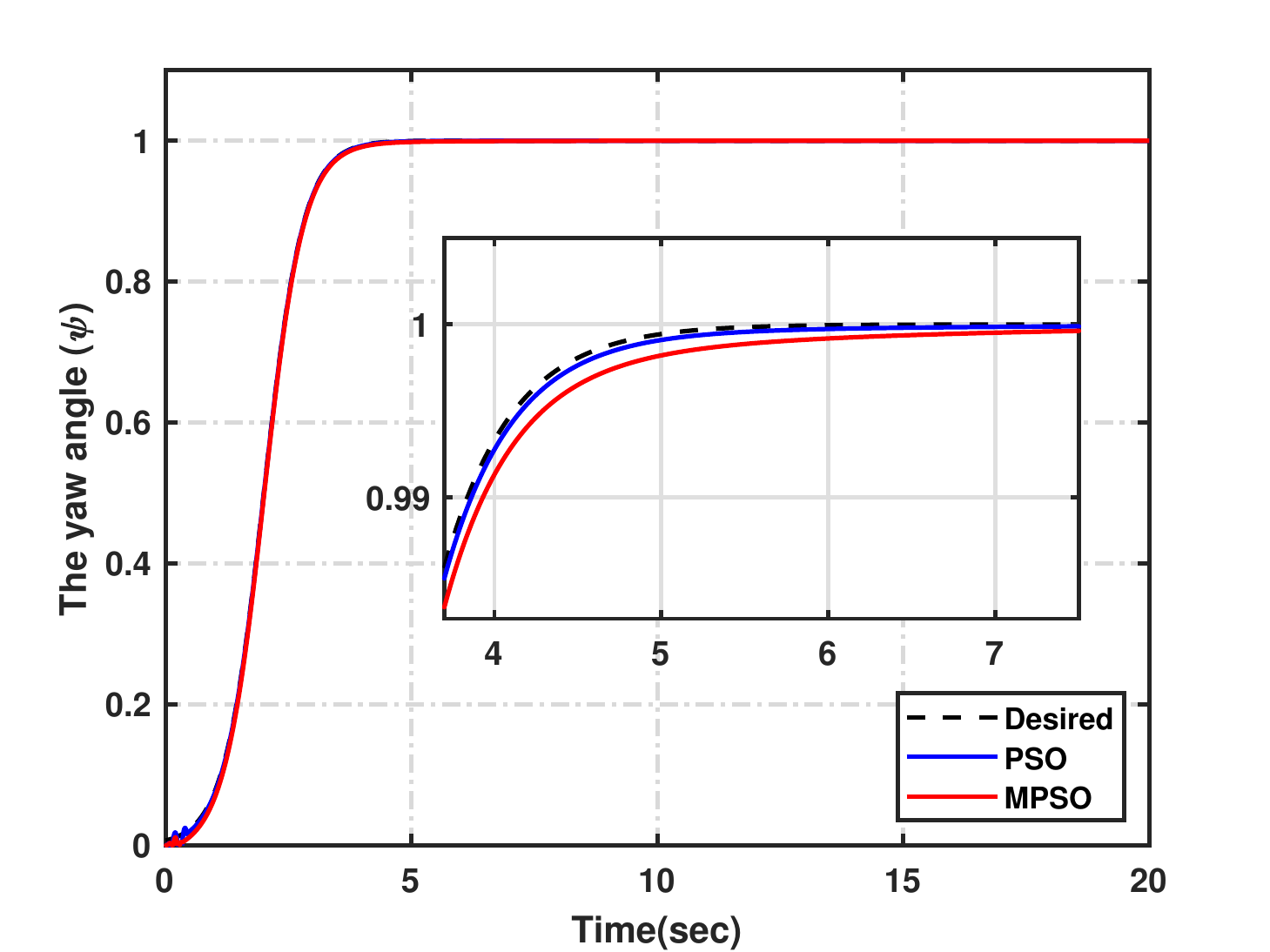}
			\label{Fig6b}
		}
		\subfigure[]{
			\includegraphics[width=7cm]{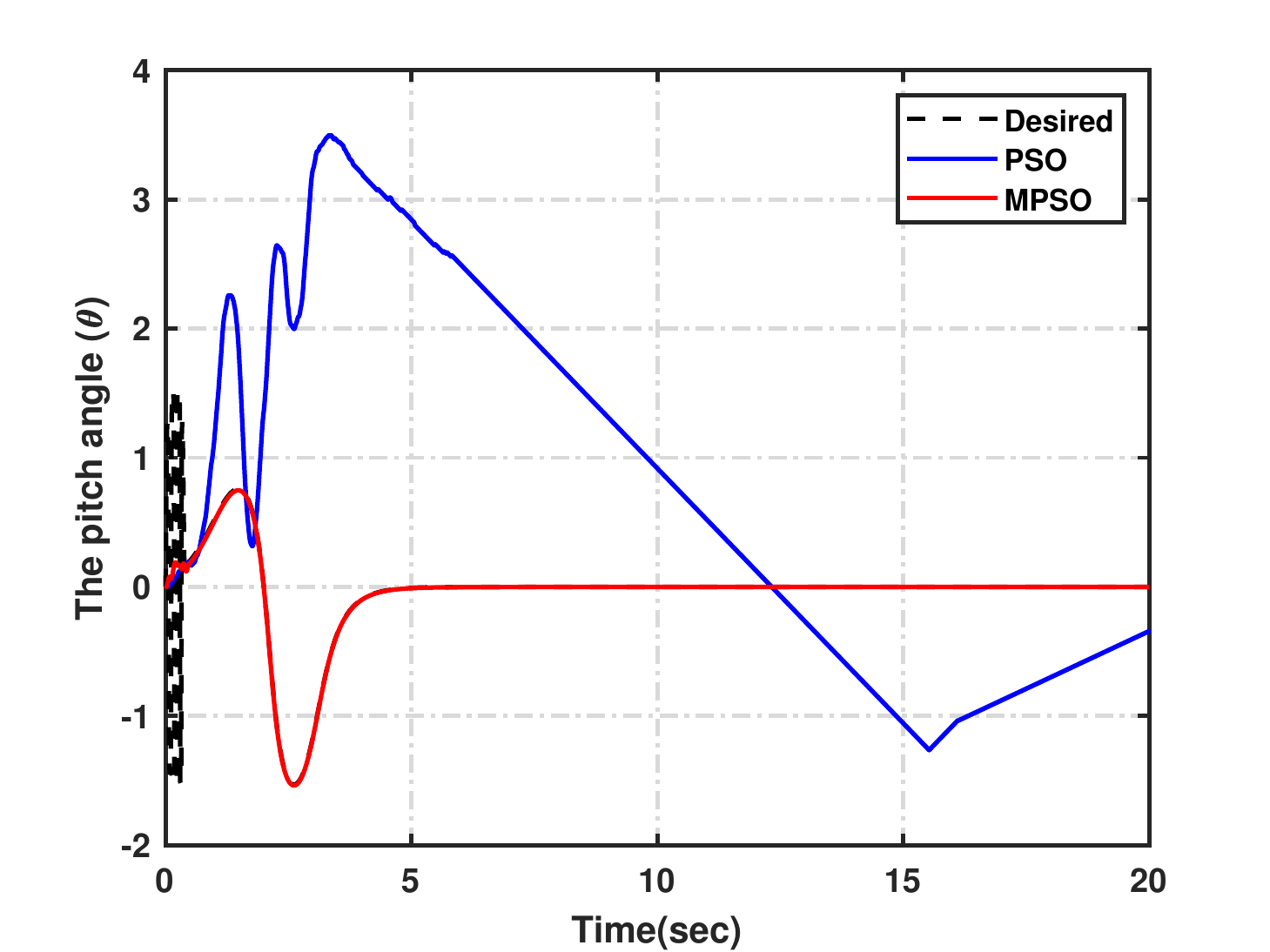}
			\label{Fig6c}
		}
\caption{Simulation results under the helicopter's mass change (one and a half times mass): outputs of the system.}\label{Fig6}
\end{figure}
\begin{figure}[t!]%
\centering
		\subfigure[]{
			\includegraphics[width=7cm]{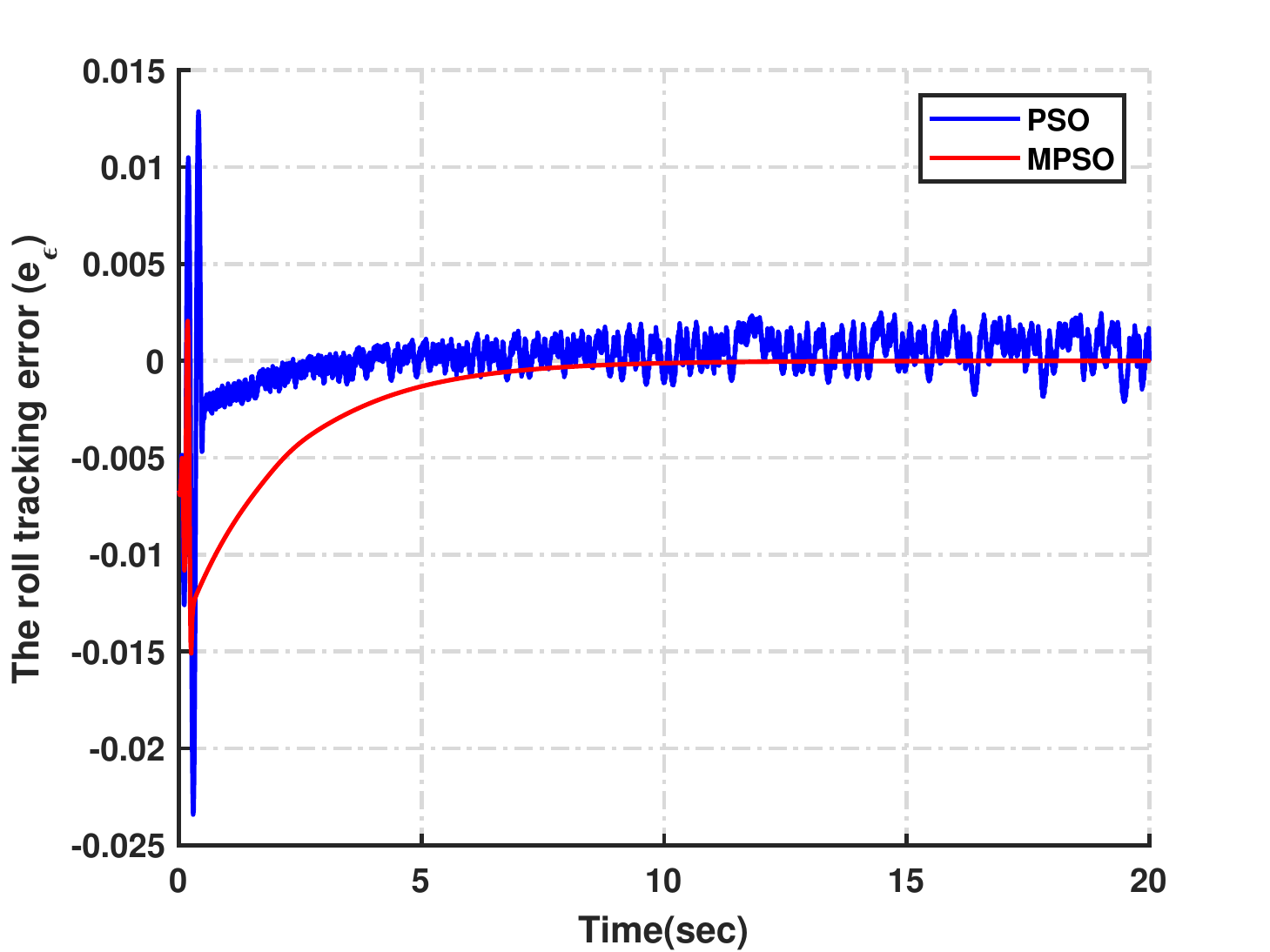}
			\label{Fig6d}	
		}
		\subfigure[]{
			\includegraphics[width=7cm]{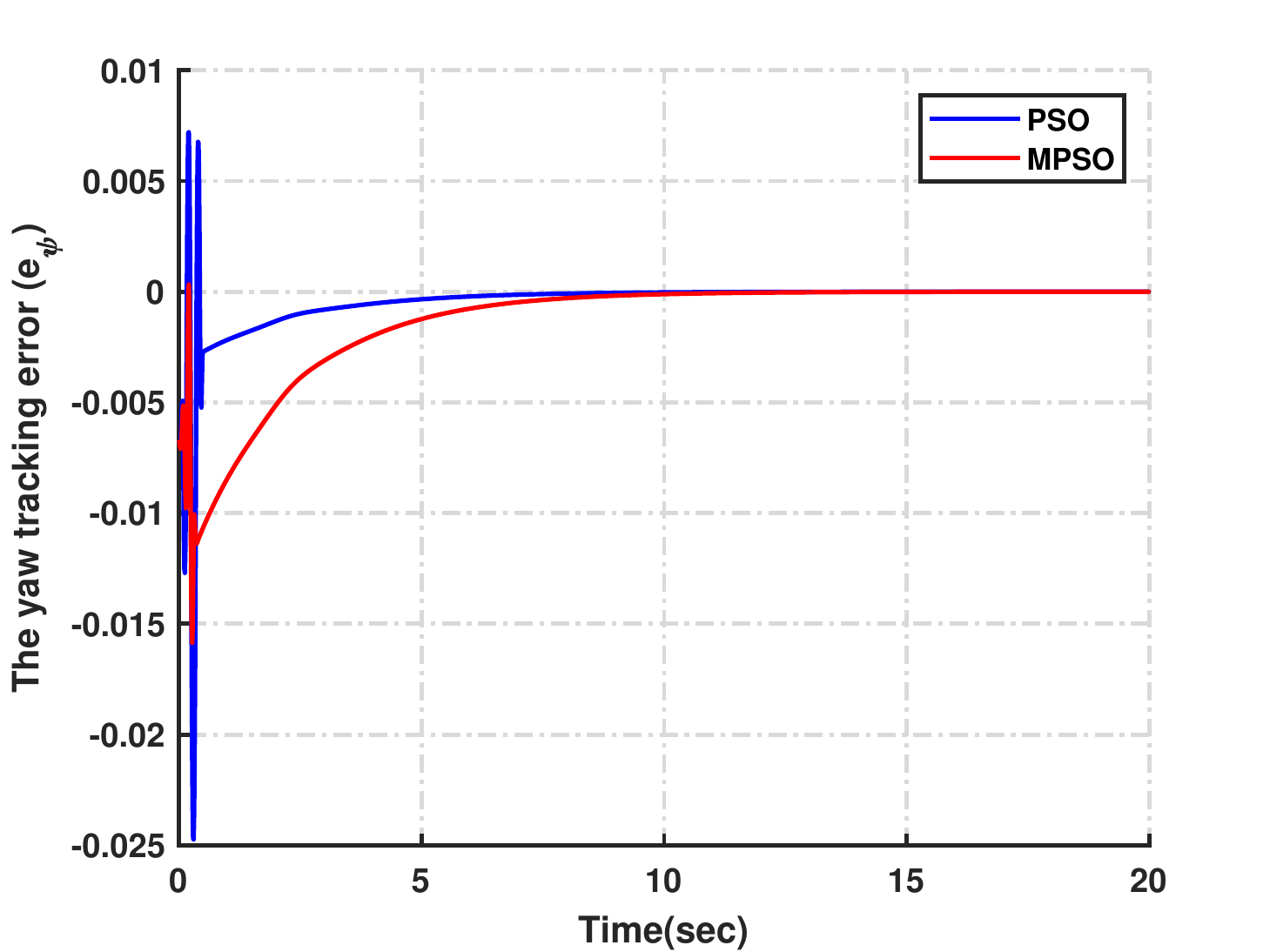}
			\label{Fig6e}
		}
		\subfigure[]{
			\includegraphics[width=7cm]{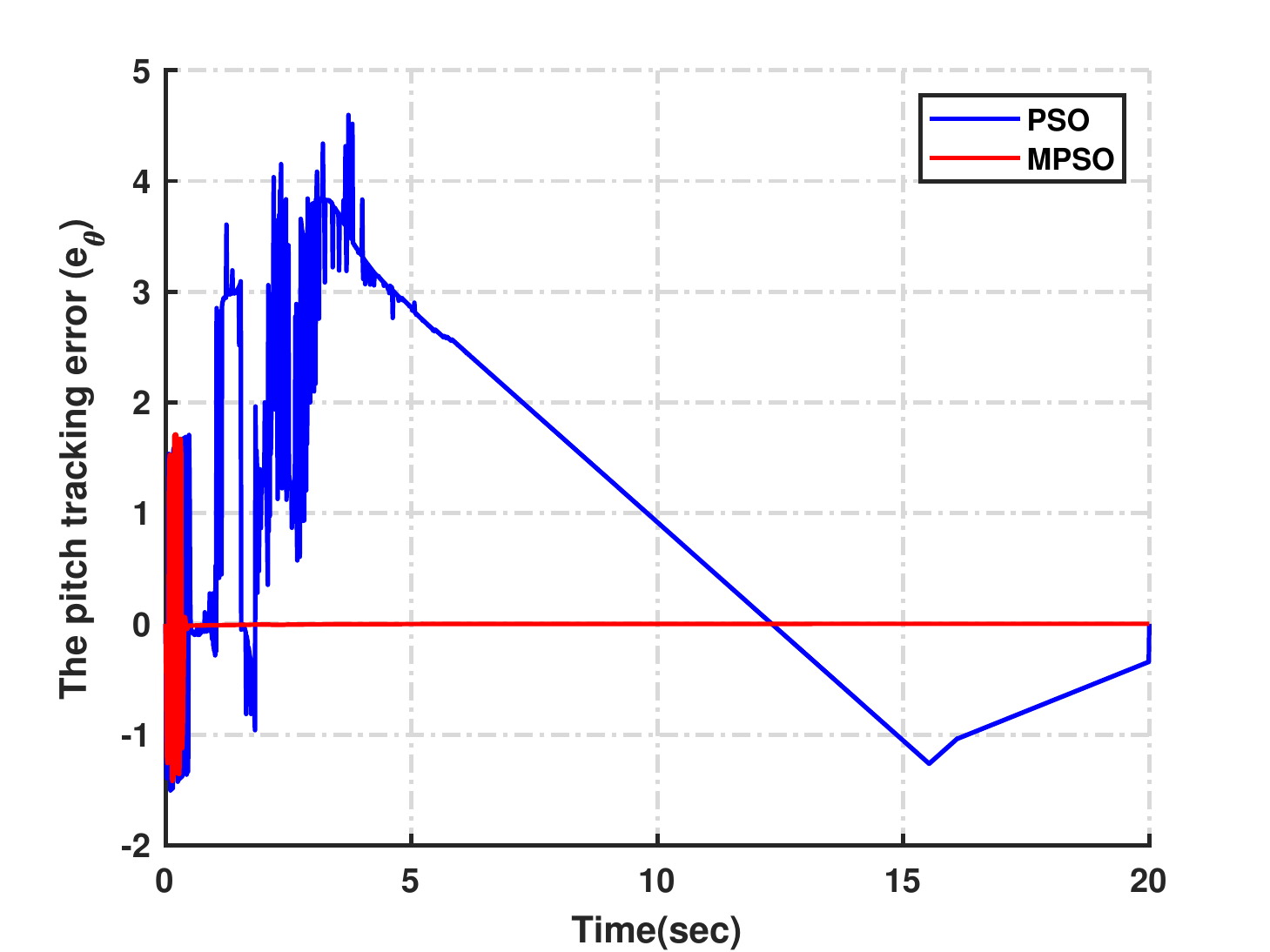}
			\label{Fig6f}
		}
\caption{Simulation results under the helicopter's mass change (one and a half times mass): motion tracking errors.}\label{Fig6.2}
\end{figure}
\begin{figure}[t!]%
\centering
		\subfigure[]{
			\includegraphics[width=7cm]{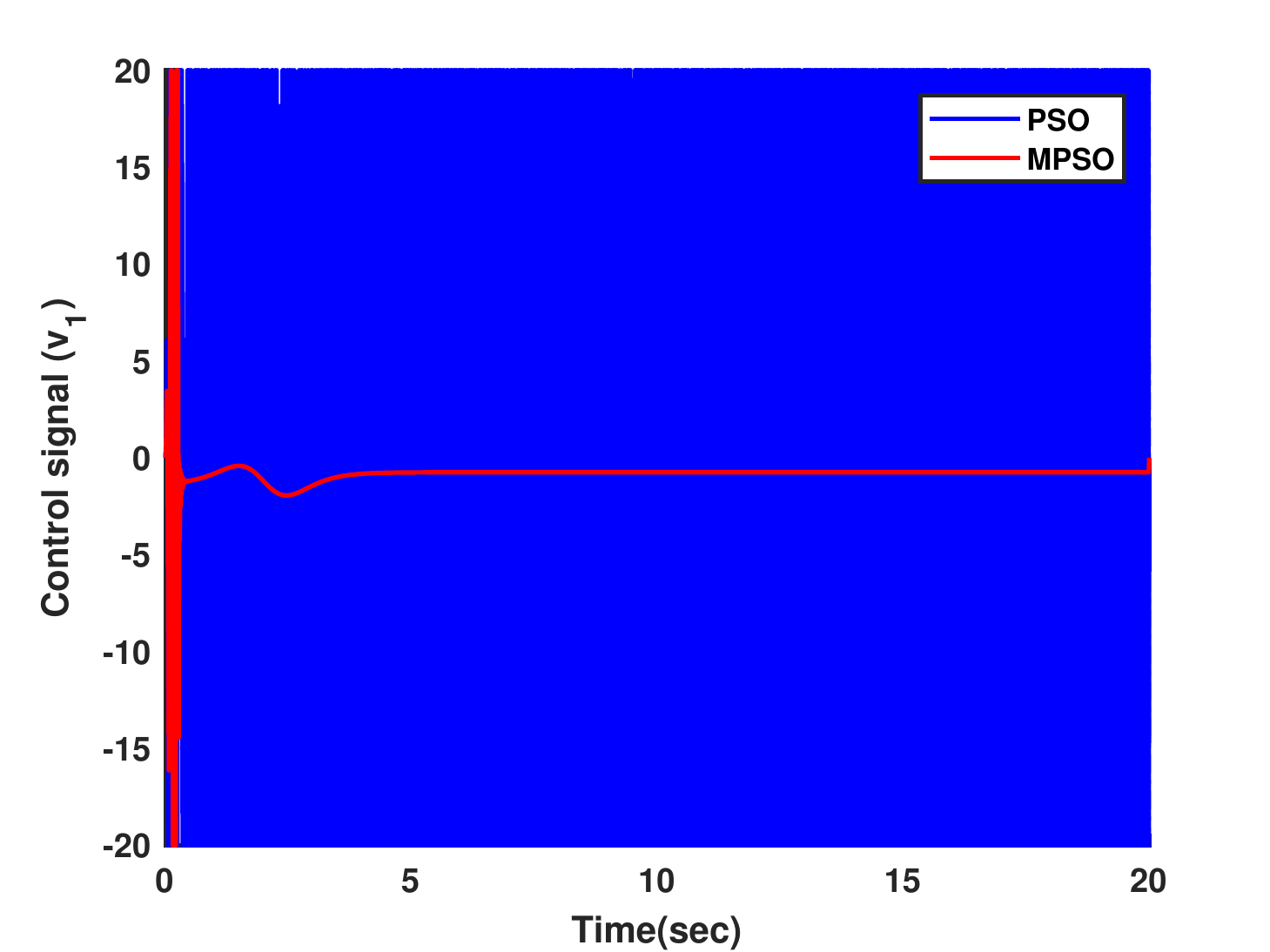}
			\label{Fig6g}	
		}
		\subfigure[]{
			\includegraphics[width=7cm]{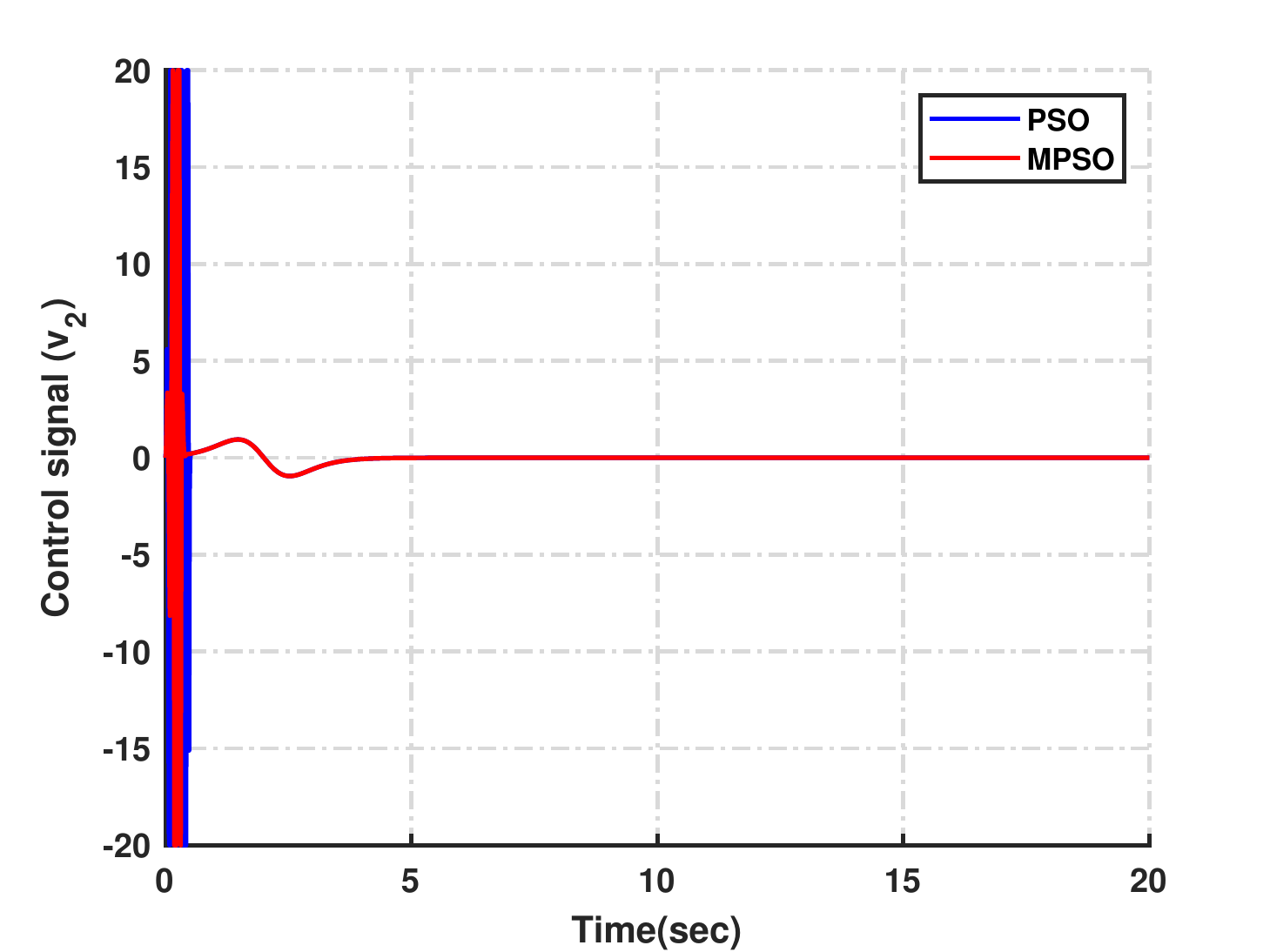}
			\label{Fig6h}
		}
		\subfigure[]{
			\includegraphics[width=7cm]{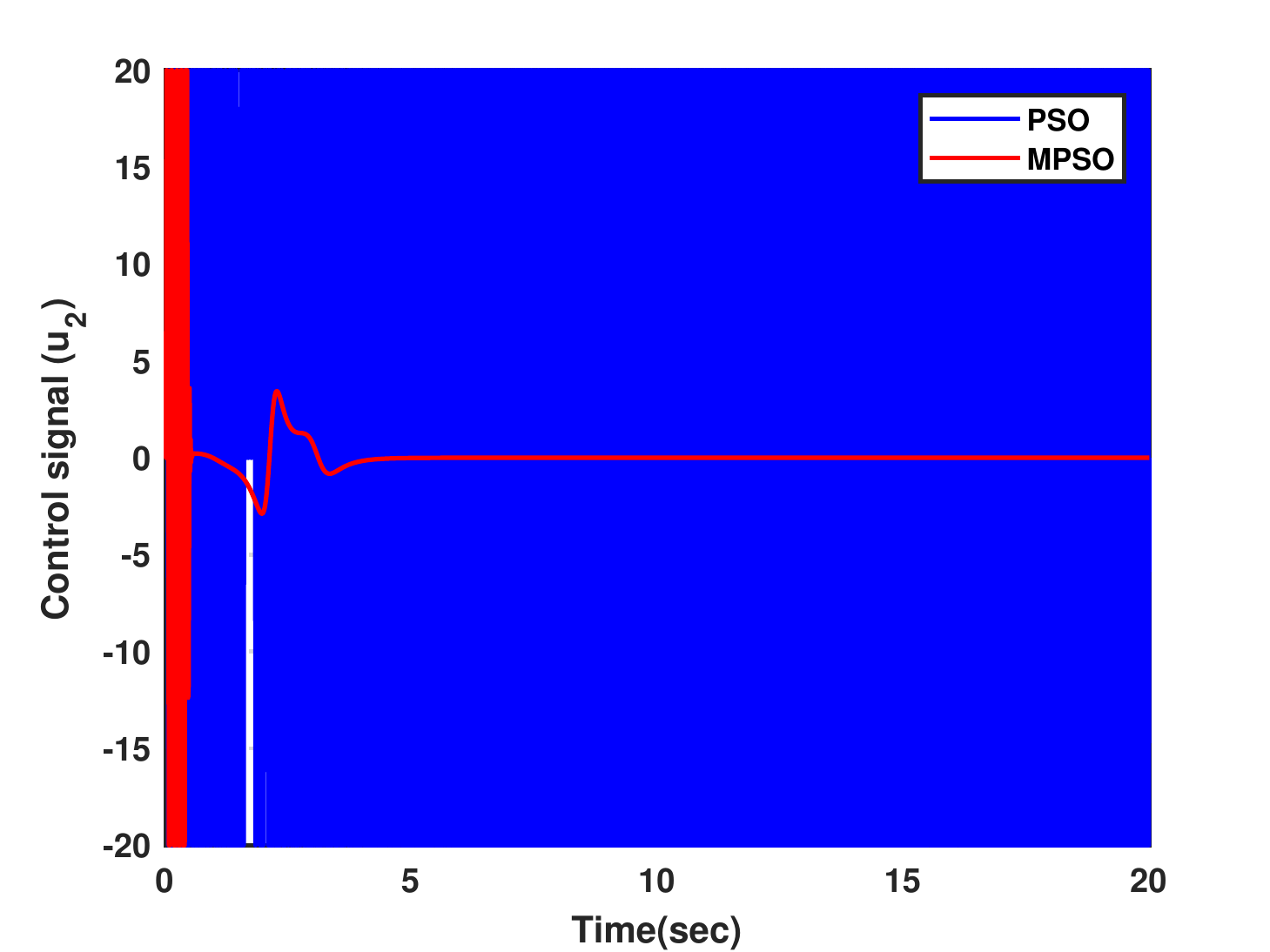}
			\label{Fig6i}
		}
\caption{Simulation results under the helicopter's mass change (one and a half times mass): control signals.}\label{Fig6.3}
\end{figure}

Moreover, we performed an additional test; we applied step disturbances of 1, 1, and 0.1 to the roll, pitch, and yaw angles at 12, 14, and 16 seconds, respectively. \hyperref[Fig7]{Fig. \ref{Fig7}} to \hyperref[Fig7]{Fig. \ref{Fig7.3}} show the results. Although applying the disturbance to the yaw angle leaves a small pitch tracking error, the controller is able to decay the roll and pitch tracking errors to zero without fluctuation or unstable behavior.  

\begin{figure}[t!]%
\centering
		\subfigure[]{
			\includegraphics[width=7cm]{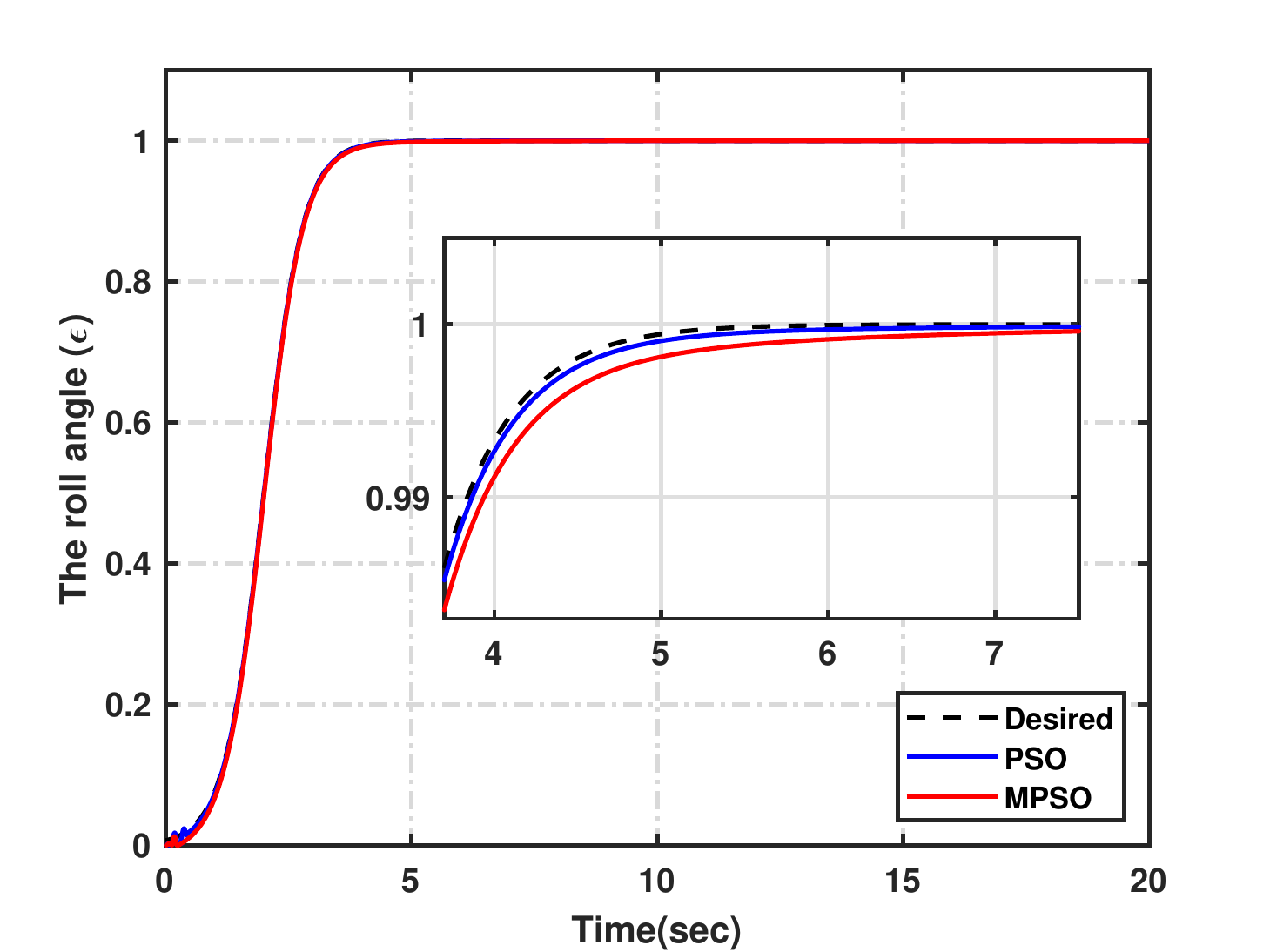}
			\label{Fig7a}	
		}
		\subfigure[]{
			\includegraphics[width=7cm]{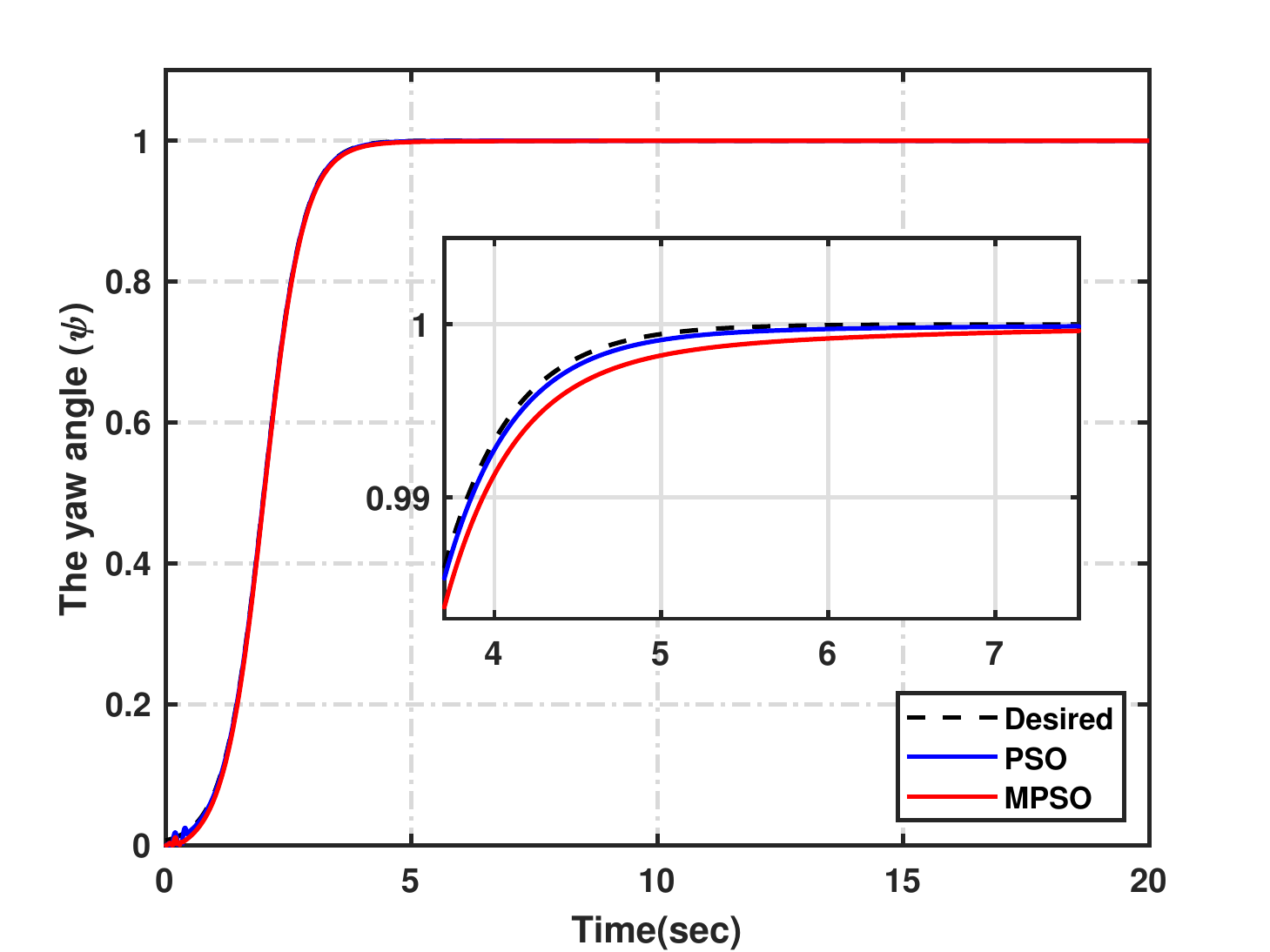}
			\label{Fig7b}
		}
		\subfigure[]{
			\includegraphics[width=7cm]{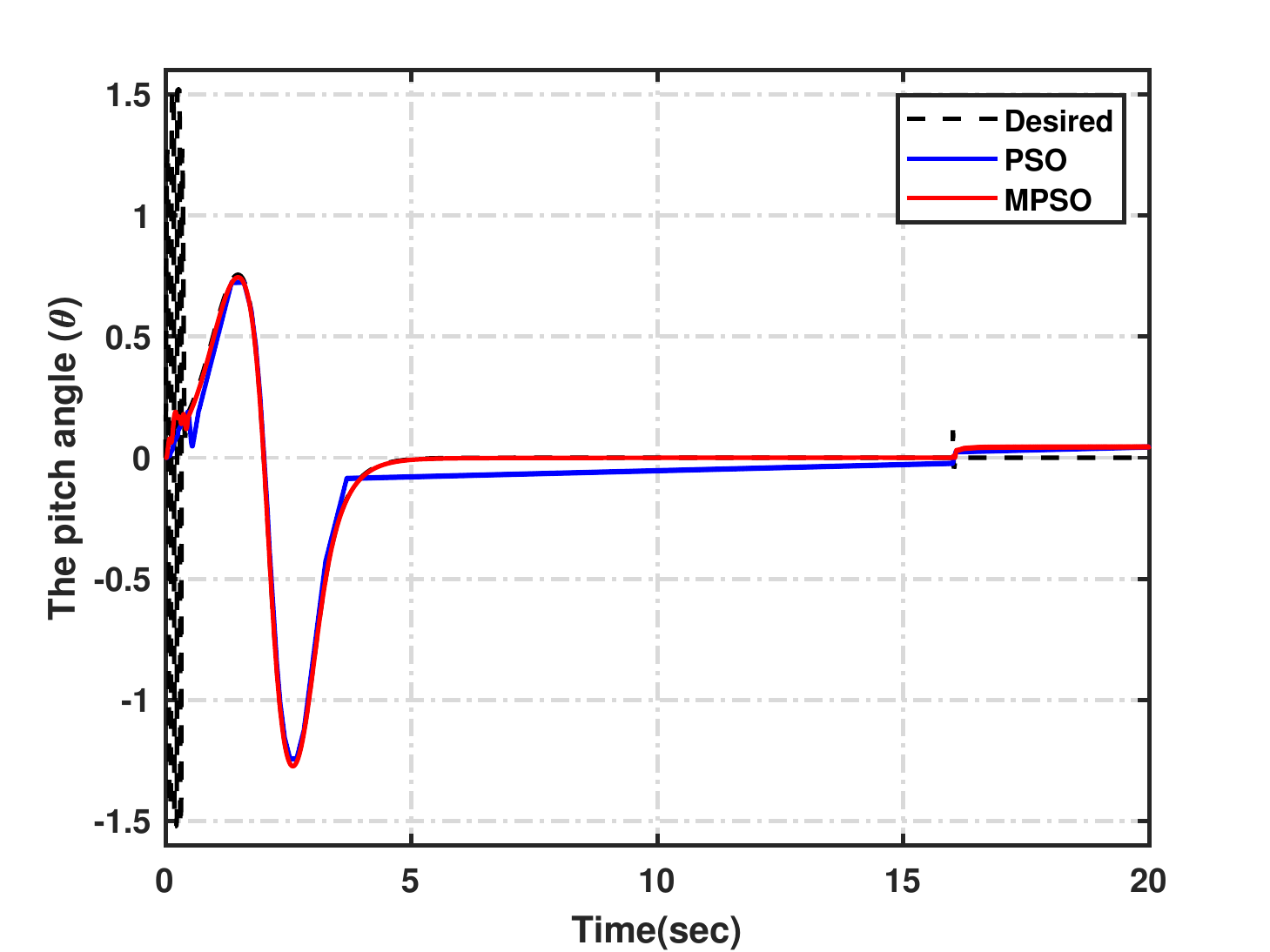}
			\label{Fig7c}
		}
\caption{Simulation results under disturbances: outputs of the system.}\label{Fig7}
\end{figure}
\begin{figure}[t!]%
\centering
		\subfigure[]{
			\includegraphics[width=7cm]{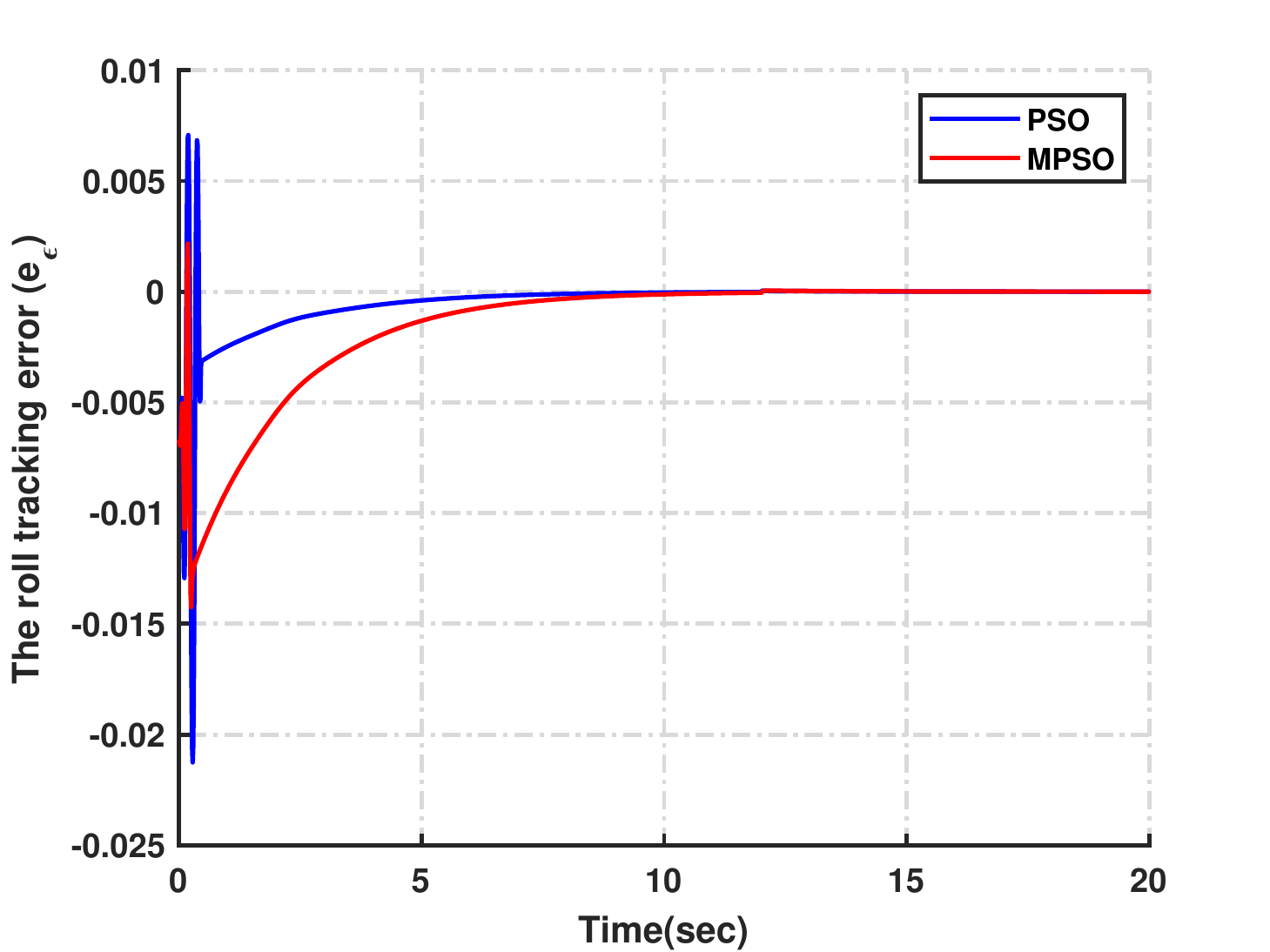}
			\label{Fig7d}	
		}
		\subfigure[]{
			\includegraphics[width=7cm]{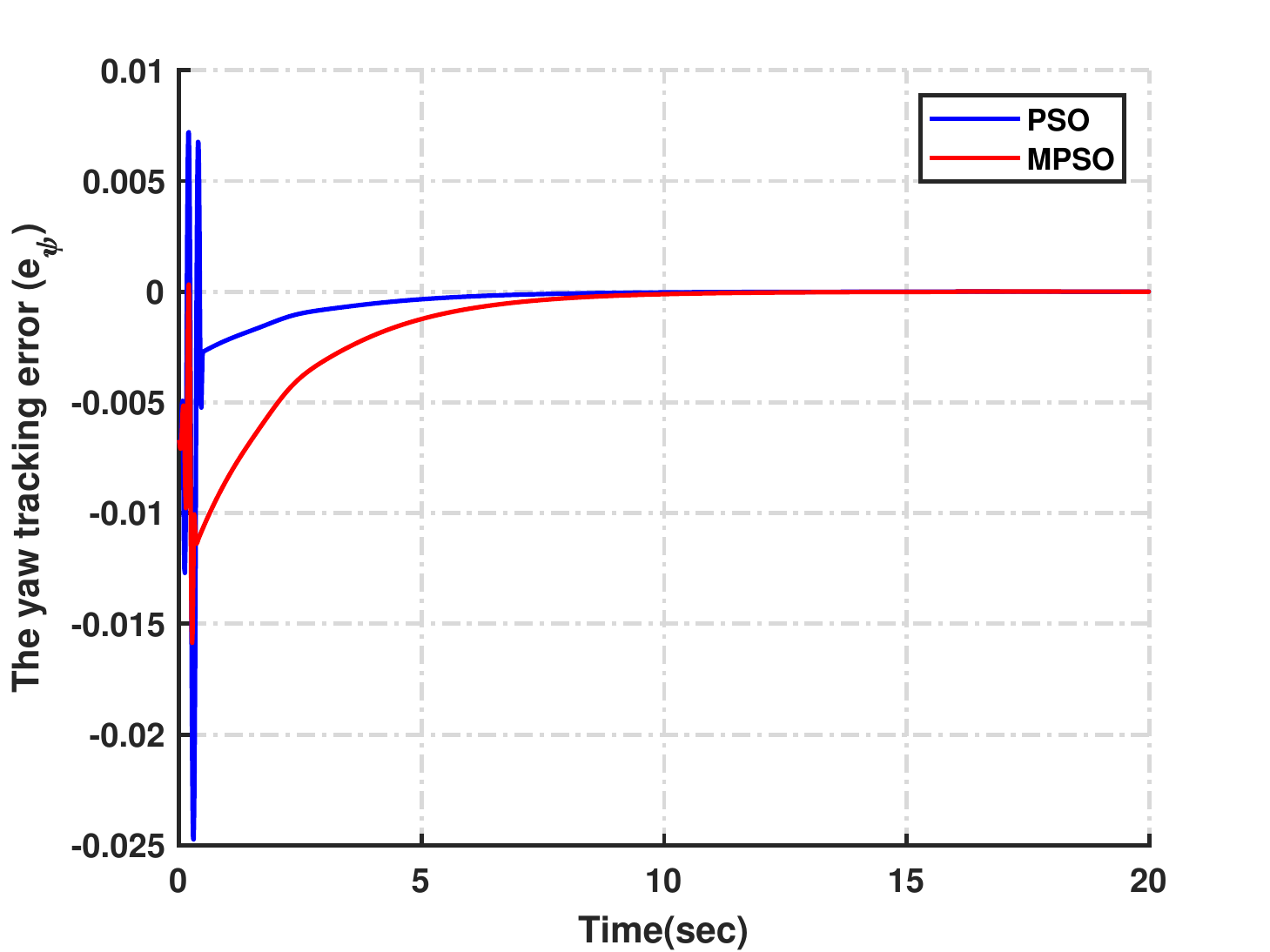}
			\label{Fig7e}
		}
		\subfigure[]{
			\includegraphics[width=7cm]{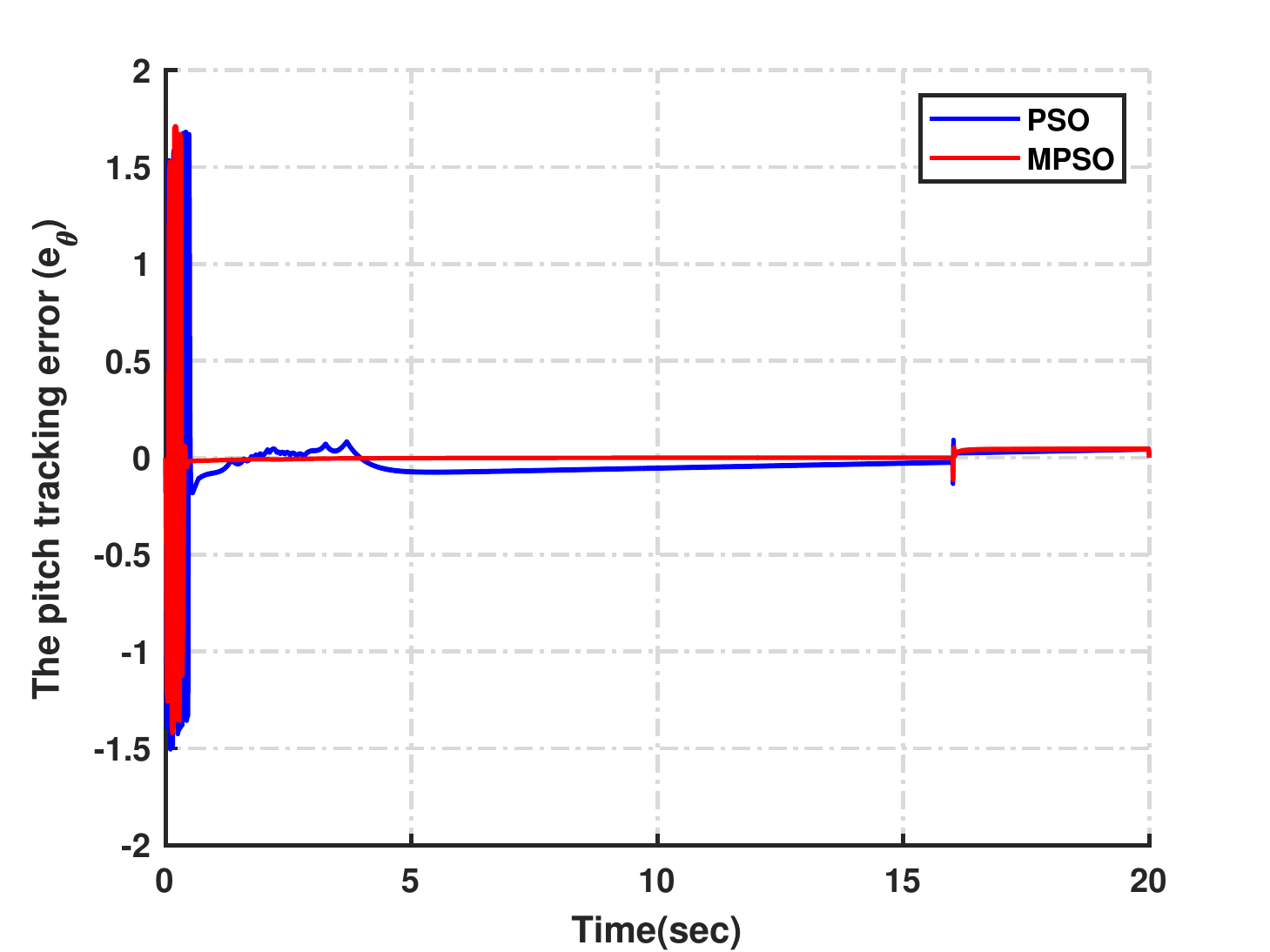}
			\label{Fig7f}
		}
\caption{Simulation results under disturbances: motion tracking errors.}\label{Fig7.2}
\end{figure}
\begin{figure}[t!]%
\centering
		\subfigure[]{
			\includegraphics[width=7cm]{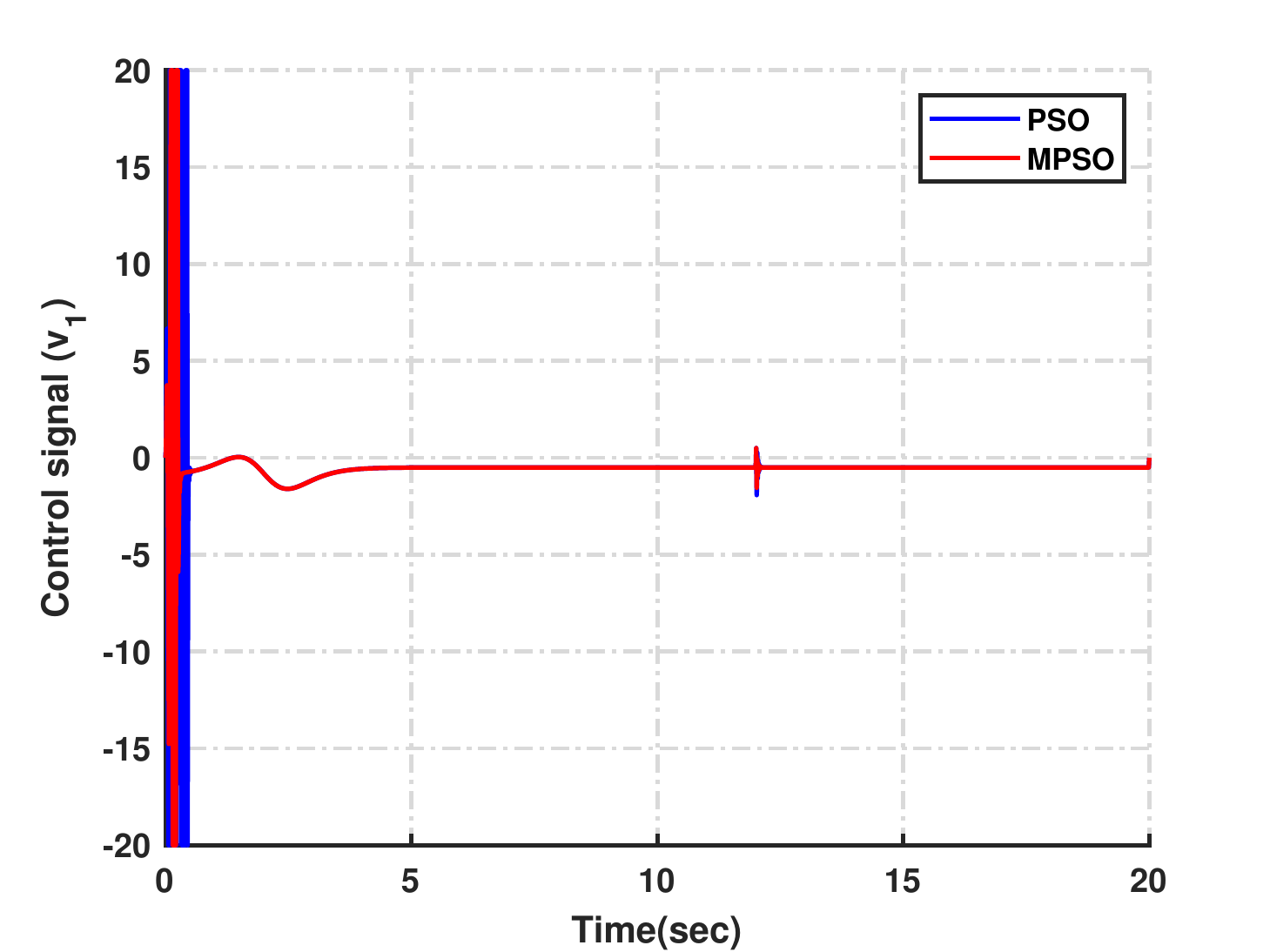}
			\label{Fig7g}	
		}
		\subfigure[]{
			\includegraphics[width=7cm]{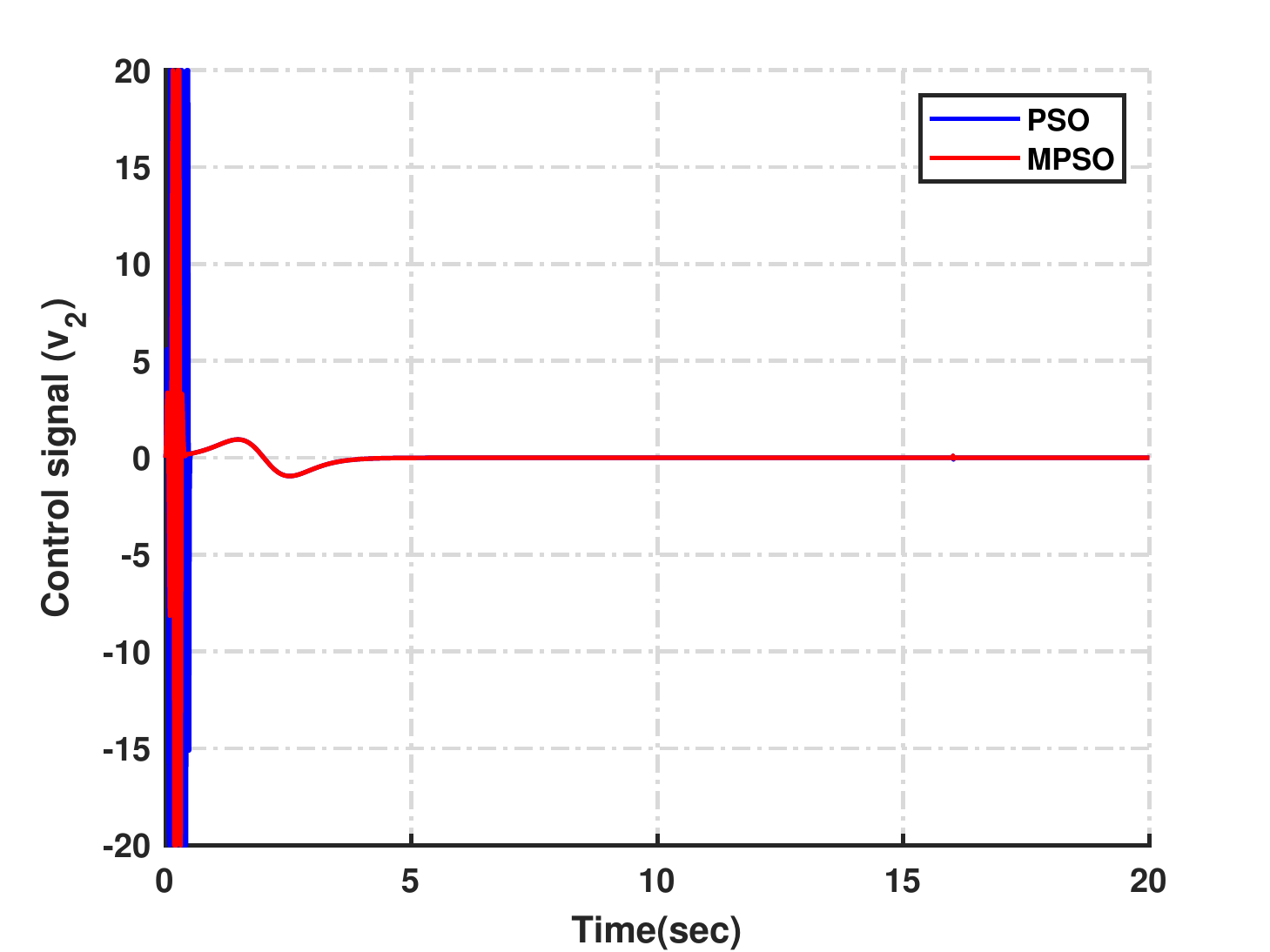}
			\label{Fig7h}
		}
		\subfigure[]{
			\includegraphics[width=7cm]{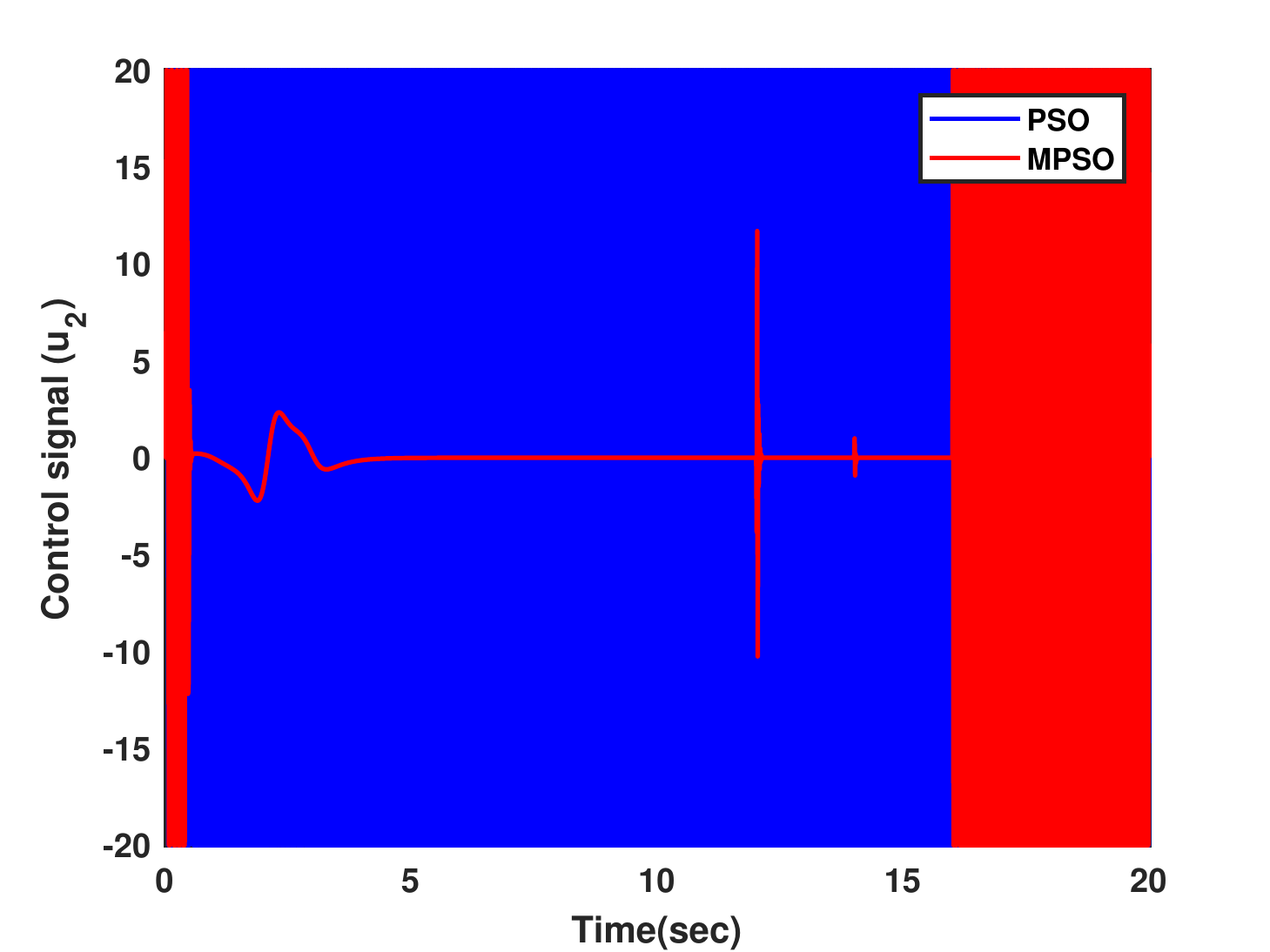}
			\label{Fig7i}
		}
\caption{Simulation results under disturbances: control signals.}\label{Fig7.3}
\end{figure}

\subsection{Further analysis of the MPSO algorithm performance against other metaheuristics}\label{subsec5.1}

To further investigate the effectiveness of the MPSO to optimize the 3-DOF helicopter control, we compare its results to the following algorithms' results: 

\begin{itemize}
\item[•] MPSO algorithm \cite{naderi2020designing}
\item[•] Standard PSO algorithm \cite{kennedy1995particle}
\item[•] Genetic algorithm (GA) \cite{michalewicz1996genetic}
\item[•] Ant colony optimization (ACO) \cite{dorigo2006ant}
\item[•] Imperialist competitive algorithm (ICA) \cite{atashpaz2007imperialist}
\item[•] Grey wolf optimizer (GWO) \cite{mirjalili2014grey}
\item[•] Bat algorithm (BA) \cite{yang2010new}
\item[•] Differential evolution (DE) \cite{storn1997differential}
\end{itemize}

\begin{table}[t!]
\begin{center}
\caption{Upper and lower bound of search space for each parameter in MPSO, ACO, and ICA.}\label{table4}%
\begin{tabular}{@{}p{6cm} p{2.5cm} p{2cm}@{}}
\toprule
Parameters & Lower bound & Upper bound\\
\midrule
			$K_\epsilon, K_\psi, K_\theta$ & 0 & 200 \\
			$K_p, K_d$ & 0 & 100 \\
			Centers for all fuzzy membership functions & 0 & 10\\
			$\Gamma_\epsilon, \Gamma_\psi, \Gamma_\theta$ & 0 & 100 \\
\bottomrule
\end{tabular}
\end{center}
\end{table}

All these algorithms are implemented to optimize the adaptive fuzzy logic controllers of the 3-DOF helicopter model. The population size is selected to be 30 for all algorithms. For those algorithms incapable of defining the upper and lower bound of search space for each parameter, [0 200] is chosen as the search space. For other algorithms (MPSO, ACO, and ICA), \hyperref[table4]{Table \ref{table4}} shows the upper and lower bound of search space for each parameter. The cost function is also the same for all RMSE. Each algorithm has been run 25 times in MATLAB. The best values of the fitness function (the lowest ones) and the mean of fitness function values for the 25 runs with 500 iterations are reported in \hyperref[table5]{Table \ref{table5}}. The elapsed time in the table refers to the average time required to complete one run of each algorithm. According to the results in the table, the MPSO algorithm has the lowest cost function value among others. As the cost function is RMSE, the MPSO provides controllers' gains and parameters that yield better accuracy. It also has the second-fastest elapsed time. In other words, its computational complexity is lower than other algorithms except for the PSO algorithm.

\begin{table}[b!]
\begin{center}
\caption{Comparison of MPSO and other metaheuristics - 500 iterations.}\label{table5}%
\begin{tabular}{@{}p{2cm} p{2.7cm} p{2.7cm} p{2cm}@{}}
\toprule
Algorithm & Best fitness function value & Mean of fitness function values & Elapsed time (minute)\\
\midrule
			MPSO & 0.1879 & 0.1949 & 65 \\
			PSO & 0.2366 & 0.3115 & 64 \\
			GA & 0.4554 & 0.7568 & 76\\
			ACO & 0.5780 & 0.7592 & 73 \\
			ICA & 0.4261 & 0.5234 & 68 \\
			GWO & 0.2001 & 0.2641 & 81\\
			BA & 0.9914 & 1.2729 & 75 \\
			DE & 0.4067 & 0.7036 & 74\\ 
\bottomrule
\end{tabular}
\end{center}
\end{table}

\hyperref[Fig8]{Fig. \ref{Fig8}} shows the RMSE evolution of algorithms; the convergence speed of the MPSO algorithm is faster than other ones, and it converges to its optimal value (0.1879) in less than 50 iterations. Second to the MPSO algorithm is the GWO with the fitness function value of 0.2001 reached at iteration 430 (fitness function values are presented in \hyperref[table5]{Table \ref{table5}}). The highest fitness function value at iteration 500 belongs to the BA (0.9914), which indicates its incapability in finding the global optimum or at least a better local optimum. Apart from that, GA has the slowest convergence rate, reaching its optimal value at iteration 490.

\begin{figure}[b!]%
\centering
\includegraphics[width=\textwidth]{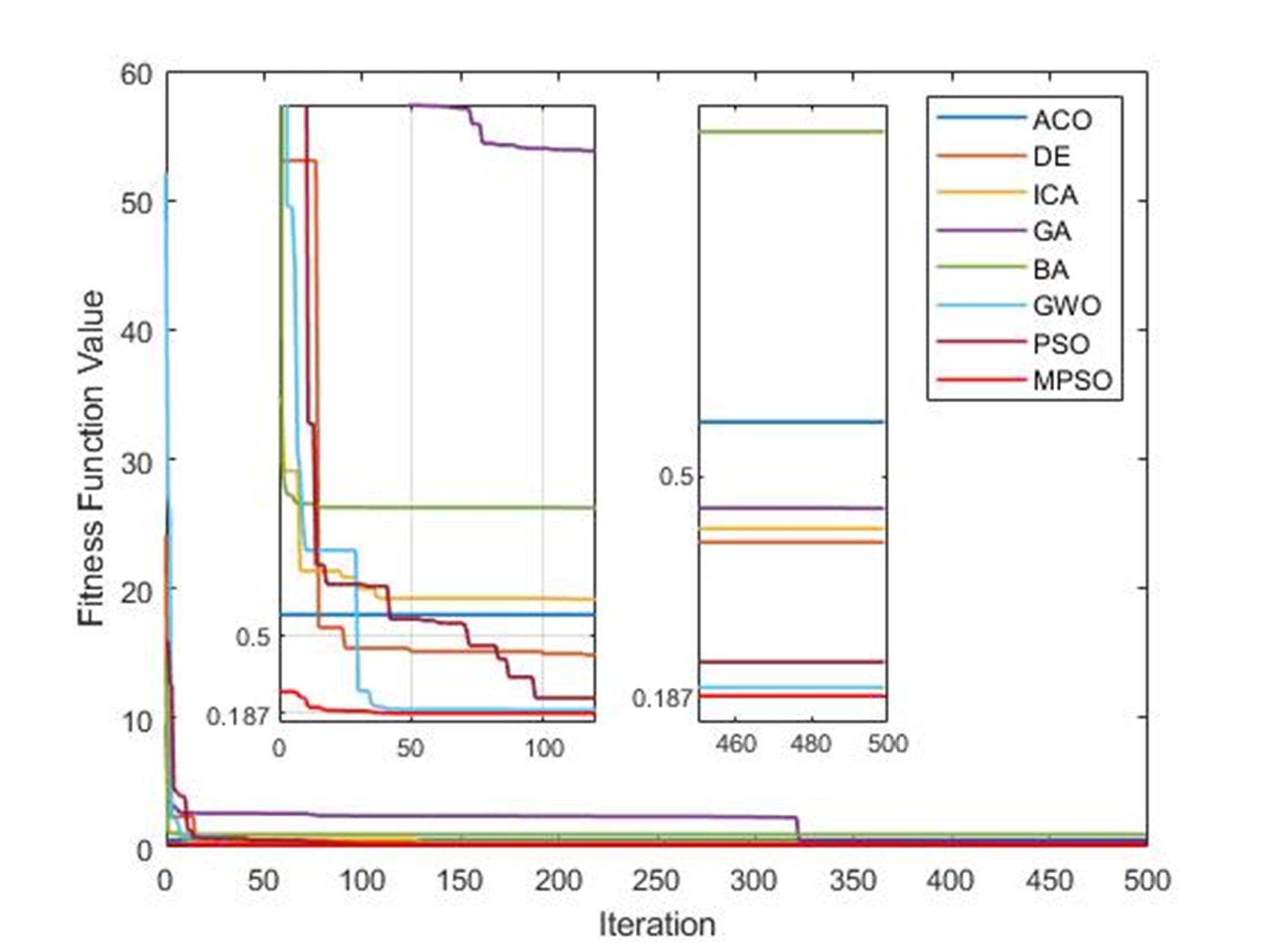}
\caption{RMSE evolution of the MPSO and other metaheuristics.}\label{Fig8}
\end{figure}

The algorithms are compared using Wilcoxon and Friedman tests to analyze the results statistically. \hyperref[table6]{Table \ref{table6}} and \hyperref[table7]{Table \ref{table7}} depict the results of the comparison. According to $p$-values in \hyperref[table6]{Table \ref{table6}}, the null hypothesis is rejected with a confidence level of 0.99, meaning MPSO shows a significant improvement over other algorithms. The high statistic numbers also verify that there is a significant difference between the performance of the algorithms. Moreover, the ranks computed through the Friedman test in \hyperref[table7]{Table \ref{table7}} show that MPSO is the best performing algorithm, while BA is the worst.

\begin{table}[t!]
\begin{center}
\caption{Wilcoxon signed ranks test results.}\label{table6}%
\begin{tabular}{@{}p{3.5cm} p{2cm} p{1.3cm}@{}}
\toprule
Comparison & Statistic & $p$-value\\
\midrule
			MPSO vs PSO & 4129.0 & 0.00 \\
			MPSO vs GA & 3273.0 & 0.00 \\
			MPSO vs ACO & 8215.0 & 0.00 \\
			MPSO vs ICA & 7075.0 & 0.00 \\
			MPSO vs GWO & 8201.0 & 0.00 \\
			MPSO vs BA & 5535.0 & 0.00 \\
			MPSO vs DE & 18826.0 & 0.00 \\
\bottomrule
\end{tabular}
\end{center}
\end{table}

\begin{table}[t!]
\begin{center}
\caption{Friedman ranks.}\label{table7}%
\begin{tabular}{@{}p{2cm} p{0.9cm} p{0.9cm} p{0.9cm} p{0.9cm} p{0.9cm} p{0.9cm} p{0.9cm} p{0.9cm}@{}}
\toprule
  & MPSO & PSO & GA & ACO & ICA & GWO & BA & DE\\
\midrule
			Ranks & 1.000 & 3.312 & 7.256 & 5.930 & 5.116 & 2.218 & 7.268 & 3.900 \\ 
\bottomrule
\end{tabular}
\end{center}
\end{table}

Thus, we can conclude that the MPSO algorithm is an efficient optimization tool for the 3-DOF helicopter control and other control systems. Indeed, in a previous paper(\cite{naderi2020designing}), the MPSO algorithm has been implemented to design an interval type-2 fuzzy disturbance observer for a real ball and beam system.

\section{Conclusion}\label{sec6}
This paper proposed optimizing a fuzzy adaptive controller for the 3-DOF helicopter system through the MPSO algorithm. The controller has many parameters to be defined, including the membership functions' parameters of the fuzzy part and gains of the adaptive part. As the system is highly nonlinear, a slight change in the value of these parameters can significantly affect the controllers' performance. The MPSO algorithm has a high ability to avoid the local minimums, making it a suitable algorithm for optimizing the controller. We assessed the algorithm's performance by comparing it with the standard PSO algorithm and six other well-known metaheuristic algorithms. The results show the fast convergence rate and low computational complexity of the MPSO algorithm. In particular, the standard PSO algorithm does not achieve satisfactory results, particularly in the presence of uncertainties and disturbances. On the other hand, the controller optimized through the MPSO algorithm shows robustness properties to uncertainties and disturbance. Applying the MPSO algorithm to other control structures and further improving its performance can be considered in future research.

\bibliographystyle{plain}

\bibliography{Naderi}

\begin{thebibliography}{10}

\bibitem{atashpaz2007imperialist}
Esmaeil Atashpaz-Gargari and Caro Lucas.
\newblock Imperialist competitive algorithm: an algorithm for optimization
  inspired by imperialistic competition.
\newblock In {\em 2007 IEEE congress on evolutionary computation}, pages
  4661--4667. Ieee, 2007.

\bibitem{castaneda2016continuous}
Herman Casta{\~n}eda, Franck Plestan, Abdelhamid Chriette, and Jes{\'u}s
  de~Le{\'o}n-Morales.
\newblock Continuous differentiator based on adaptive second-order sliding-mode
  control for a 3-dof helicopter.
\newblock {\em IEEE Transactions on Industrial Electronics}, 63(9):5786--5793,
  2016.

\bibitem{chaoui2017adaptive}
Hicham Chaoui, Mehdy Khayamy, and Abdullah~Abdulaziz Aljarboua.
\newblock Adaptive interval type-2 fuzzy logic control for pmsm drives with a
  modified reference frame.
\newblock {\em IEEE Transactions on Industrial Electronics}, 64(5):3786--3797,
  2017.

\bibitem{chaoui2020adaptive}
Hicham Chaoui, Sumit Yadav, Rosita~Sharif Ahmadi, and Allal El~Moubarek Bouzid.
\newblock Adaptive interval type-2 fuzzy logic control of a three
  degree-of-freedom helicopter.
\newblock {\em Robotics}, 9(3):59, 2020.

\bibitem{chen2015adaptive}
Mou Chen, Peng Shi, and Cheng-Chew Lim.
\newblock Adaptive neural fault-tolerant control of a 3-dof model helicopter
  system.
\newblock {\em IEEE Transactions on Systems, Man, and Cybernetics: Systems},
  46(2):260--270, 2015.

\bibitem{chen2018adaptive}
Yuhong Chen, Xuebo Yang, and Xiaolong Zheng.
\newblock Adaptive neural control of a 3-dof helicopter with unknown time
  delay.
\newblock {\em Neurocomputing}, 307:98--105, 2018.

\bibitem{ding2015chaotic}
Li~Ding, Hongtao Wu, and Yu~Yao.
\newblock Chaotic artificial bee colony algorithm for system identification of
  a small-scale unmanned helicopter.
\newblock {\em International Journal of Aerospace Engineering}, 2015, 2015.

\bibitem{dorigo2006ant}
Marco Dorigo, Mauro Birattari, and Thomas Stutzle.
\newblock Ant colony optimization.
\newblock {\em IEEE computational intelligence magazine}, 1(4):28--39, 2006.

\bibitem{dutta2021adaptive}
Lakshmi Dutta and Dushmanta Kumar~Das.
\newblock Adaptive model predictive control design using multiple model second
  level adaptation for parameter estimation of two-degree freedom of helicopter
  model.
\newblock {\em International Journal of Robust and Nonlinear Control},
  31(8):3248--3278, 2021.

\bibitem{gonzalez2018design}
Hernando Gonzalez, Carlos Arizmendi, Joan Garcia, Alexander Anguo, and Cristian
  Herrera.
\newblock Design and experimental validation of adaptive fuzzy pid controller
  for a three degrees of freedom helicopter.
\newblock In {\em 2018 IEEE International Conference on Fuzzy Systems
  (FUZZ-IEEE)}, pages 1--6. IEEE, 2018.

\bibitem{hu2020fuzzy}
Yanpeng Hu, Yanping Yang, Shu Li, and Yaoming Zhou.
\newblock Fuzzy controller design of micro-unmanned helicopter relying on
  improved genetic optimization algorithm.
\newblock {\em Aerospace Science and Technology}, 98:105685, 2020.

\bibitem{humaidi2019particle}
Amjad~J Humaidi and Alaq~F Hasan.
\newblock Particle swarm optimization--based adaptive super-twisting sliding
  mode control design for 2-degree-of-freedom helicopter.
\newblock {\em Measurement and Control}, 52(9-10):1403--1419, 2019.

\bibitem{kennedy1995particle}
James Kennedy and Russell Eberhart.
\newblock Particle swarm optimization.
\newblock In {\em Proceedings of ICNN'95-international conference on neural
  networks}, volume~4, pages 1942--1948. IEEE, 1995.

\bibitem{michalewicz1996genetic}
Zbigniew Michalewicz and Zbigniew Michalewicz.
\newblock {\em Genetic algorithms+ data structures= evolution programs}.
\newblock Springer Science \& Business Media, 1996.

\bibitem{mirjalili2014grey}
Seyedali Mirjalili, Seyed~Mohammad Mirjalili, and Andrew Lewis.
\newblock Grey wolf optimizer.
\newblock {\em Advances in engineering software}, 69:46--61, 2014.

\bibitem{naderi2020designing}
Shokoufeh Naderi, Behrooz Rezaie, and Mostafa Faramin.
\newblock Designing an interval type-2 fuzzy disturbance observer for a class
  of nonlinear systems based on modified particle swarm optimization.
\newblock {\em Applied Intelligence}, 50(11):3731--3747, 2020.

\bibitem{pounds2014stability}
Paul~EI Pounds and Aaron~M Dollar.
\newblock Stability of helicopters in compliant contact under pd-pid control.
\newblock {\em IEEE Transactions on Robotics}, 30(6):1472--1486, 2014.

\bibitem{salahshour2019designing}
Esmaeil Salahshour, Milad Malekzadeh, Reza Gholipour, and Saeed
  Khorashadizadeh.
\newblock Designing multi-layer quantum neural network controller for chaos
  control of rod-type plasma torch system using improved particle swarm
  optimization.
\newblock {\em Evolving Systems}, 10(3):317--331, 2019.

\bibitem{storn1997differential}
Rainer Storn and Kenneth Price.
\newblock Differential evolution--a simple and efficient heuristic for global
  optimization over continuous spaces.
\newblock {\em Journal of global optimization}, 11(4):341--359, 1997.

\bibitem{tan2016genetic}
KC~Tan, CY~Cheong, and Y~Peng.
\newblock A genetic approach for real-time identification and control of a
  helicopter system.
\newblock {\em International Journal of Computational Intelligence in Control},
  8(1):11--18, 2016.

\bibitem{teiar2014adaptive}
Hakim Teiar, Salim Boukaka, Hicham Chaoui, and Pierre Sicard.
\newblock Adaptive fuzzy logic control structure of pmsms.
\newblock In {\em 2014 IEEE 23rd International Symposium on Industrial
  Electronics (ISIE)}, pages 745--750. IEEE, 2014.

\bibitem{uddin2018active}
Md~Mosleh Uddin, Pratik Sarker, Colin~R Theodore, and Uttam~K Chakravarty.
\newblock Active vibration control of a helicopter rotor blade by using a
  linear quadratic regulator.
\newblock In {\em ASME 2018 International Mechanical Engineering Congress and
  Exposition}. American Society of Mechanical Engineers Digital Collection,
  2018.

\bibitem{xue2019training}
Shengri Xue, Zhan Li, and Liu Yang.
\newblock Training a model-free reinforcement learning controller for a
  3-degree-of-freedom helicopter under multiple constraints.
\newblock {\em Measurement and Control}, 52(7-8):844--854, 2019.

\bibitem{yang2010new}
Xin-She Yang.
\newblock A new metaheuristic bat-inspired algorithm.
\newblock In {\em Nature inspired cooperative strategies for optimization
  (NICSO 2010)}, pages 65--74. Springer, 2010.

\bibitem{yang2019adaptive}
Xuebo Yang and Xiaolong Zheng.
\newblock Adaptive nn backstepping control design for a 3-dof helicopter:
  Theory and experiments.
\newblock {\em IEEE Transactions on Industrial Electronics}, 67(5):3967--3979,
  2019.

\bibitem{yu2019optimal}
Gwo-Ruey Yu and Ping-Hsueh Hsieh.
\newblock Optimal design of helicopter control systems using particle swarm
  optimization.
\newblock In {\em 2019 IEEE International Conference on Industrial Cyber
  Physical Systems (ICPS)}, pages 346--351. IEEE, 2019.

\end{thebibliography}

\end{document}